\newcommand{\id}[1]{{\it #1}}
\newcommand{\proc}[1]{\textsc{#1}}
\newcommand{\kw}[1]{{\bf #1}}
\newcommand{\True}[0]{\kw{True}}
\newcommand{\Break}{\kw{break}}
\newcommand{\pddl}[1]{{\texttt{#1}}} %
\newcommand{\solstart}[1] {\vspace{.25in} \textbf{Solution to Q#1} \vspace{.10in}}
\newcommand{\solend}{\newpage\setcounter{page}{1}\setcounter{equation}{0}}
\newcommand{\dee}{\mathop{\mathrm{d}\!}}
\def\balign#1\ealign{\begin{align}#1\end{align}}
\def\baligns#1\ealigns{\begin{align*}#1\end{align*}}
\def\balignat#1\ealign{\begin{alignat}#1\end{alignat}}
\def\balignats#1\ealigns{\begin{alignat*}#1\end{alignat*}}
\def\bitemize#1\eitemize{\begin{itemize}#1\end{itemize}}
\def\benumerate#1\eenumerate{\begin{enumerate}#1\end{enumerate}}
\newenvironment{talign*}
 {\csname align*\endcsname}
 {\endalign}
\newenvironment{talign}
 {\csname align\endcsname}
 {\endalign}
\def\balignst#1\ealignst{\begin{talign*}#1\end{talign*}}
\def\balignt#1\ealignt{\begin{talign}#1\end{talign}}
\let\originalleft\left
\let\originalright\right
\renewcommand{\left}{\mathopen{}\mathclose\bgroup\originalleft}
\renewcommand{\right}{\aftergroup\egroup\originalright}
\def\tinycitep*#1{{\tiny\citep*{#1}}}
\def\tinycitealt*#1{{\tiny\citealt*{#1}}}
\def\tinycite*#1{{\tiny\cite*{#1}}}
\def\smallcitep*#1{{\scriptsize\citep*{#1}}}
\def\smallcitealt*#1{{\scriptsize\citealt*{#1}}}
\def\smallcite*#1{{\scriptsize\cite*{#1}}}
\def\mbb#1{\mathbb{#1}}
\def\mc#1{\mathcal{#1}}
\def\<{\left\langle} %
\def\>{\right\rangle}
\def\E{\mbb{E}} %
\def\Esubarg#1#2{\E_{#1}\left[{#2}\right]}
\providecommand{\argmin}{\mathop\mathrm{arg min}}
\def\supp#1{\mathrm{supp}({#1})}
\newtheorem{theorem}{Theorem}
\renewenvironment{proof}{\noindent\textbf{Proof.}\hspace*{.3em}}{\qed\\}
\newenvironment{proof-sketch}{\noindent\textbf{Proof Sketch}
  \hspace*{1em}}{\qed\bigskip\\}
\newenvironment{proof-idea}{\noindent\textbf{Proof Idea}
  \hspace*{1em}}{\qed\bigskip\\}
\newenvironment{proof-of-lemma}[1][{}]{\noindent\textbf{Proof of Lemma {#1}}
  \hspace*{1em}}{\qed\\}
\newenvironment{proof-of-theorem}[1][{}]{\noindent\textbf{Proof of Theorem {#1}}
  \hspace*{1em}}{\qed\\}
\newenvironment{proof-attempt}{\noindent\textbf{Proof Attempt}
  \hspace*{1em}}{\qed\bigskip\\}
\newcommand{\ind}[1]{{\mathbbm{1}}_{\{ #1 \}} }
\titlespacing{\section}{0pt}{0.3\baselineskip}{0.25\baselineskip}
\titlespacing{\subsection}{0pt}{0.25\baselineskip}{0.15\baselineskip}
\titlespacing{\subsubsection}{0pt}{0.05\baselineskip}{0.03\baselineskip}
\renewcommand{\paragraph}[1]{\vspace{0.2em}\noindent\textit{#1} --}
\title{SPIRE: Synergistic Planning, Imitation, and Reinforcement for Long-Horizon Manipulation}
\author{
  Zihan Zhou$^{1,2}$,
  Animesh Garg$^{1,3}$,
  Dieter Fox$^1$,
  Caelan Garrett$^1$$^*$,
  Ajay Mandlekar$^1$$^*$ \\
  $^1$ NVIDIA, $^2$ University of Toronto, Vector Institute $^3$ Georgia Institute of Technology, $^*$ equal contribution
}
\newcommand{\notforleap}[2]{#1}
\newcommand{\rebuttal}[2]{#2}
\newcommand{\todo}[1]{}
\newcommand{\zz}[1]{}
\newcommand{\ajay}[1]{}
\newcommand{\animesh}[1]{}
\newcommand{\caelan}[1]{}
\newcommand{\sysName}{\textsc{SPIRE}\xspace}
\newcommand{\sysFull}{{Synergistic Planning Imitation and REinforcement}\xspace}
\begin{document}
\maketitle

\begin{abstract}
Robot learning has proven to be a general and effective \rebuttal{technique}{approach} for programming manipulators.
Imitation learning is able to teach robots solely from human demonstrations but is bottlenecked by the capabilities of the demonstrations.
Reinforcement learning uses exploration to discover better behaviors; however, the space of possible improvements can be too large to start from scratch.
And for both \rebuttal{techniques}{approaches}, the learning difficulty increases \rebuttal{proportional}{exponentially} to the length of the manipulation task.
Accounting for this, we propose \sysName, a system that first uses Task and Motion Planning (TAMP) to decompose tasks into smaller learning subproblems and second combines imitation and reinforcement learning to maximize their strengths.
We develop novel strategies to train learning agents when deployed in the context of a planning system.
We evaluate \sysName on a suite of long-horizon and contact-rich robot manipulation problems.
We find that \sysName outperforms prior approaches that integrate imitation learning, reinforcement learning, and planning by 35\% to 50\% in average task performance, is 
6 times
more data efficient in the number of human demonstrations needed to train proficient agents, and learns to complete tasks nearly twice as efficiently.
View \href{https://sites.google.com/view/spire-corl-2024}{https://sites.google.com/view/spire-corl-2024} for more details.

\end{abstract}

\keywords{Reinforcement Learning, Manipulation Planning, Imitation Learning} %

\section{Introduction}

Reinforcement Learning (RL) is a powerful tool that has been widely deployed to solve robot manipulation tasks~\cite{kalashnikov2018scalable, kalashnikov2021mt, tang2023industreal, handa2023dextreme}. The RL trial-and-error process allows an agent to automatically discover solutions to a task and improve its behavior over time. However, in practice, it often relies on careful reward engineering to guide the exploration process~\cite{silver2021reward, ng2000algorithms}. The exploration burden and reward engineering problem is even more challenging to overcome for long-horizon tasks, where an agent must complete several subtasks in sequence in order to solve the task~\cite{sutton1988learning}.

Imitation Learning (IL) from human demonstrations~\cite{zhang2017deep, mandlekar2021matters} is a popular alternative to reinforcement learning. Here, humans teleoperate robot arms to collect task demonstrations. Then, policies are trained using the data. This alleviates the burden of reward engineering, since correct behaviors are directly specified through demonstrations. This paradigm has recently been scaled up by collecting large datasets with teams of human operators and robots and shown to be effective for different real-world manipulation tasks~\cite{brohan2022rt, ebert2021bridge, jang2022bc}. While these agents can be effective, they typically are imperfect, with respect to both success rates and control cost, and not robust to different deployment conditions\caelan{Add specifics}, especially when it comes to long-horizon tasks~\cite{ross2011reduction}.

One way to integrate the benefits of both IL and RL is to first train an agent with IL and then finetune it with RL. This can help improve the IL agent and make it robust through trial-and-error
, while also alleviating the need for reward engineering due to the presence of the demonstrations. Several works have used this paradigm successfully, but long-horizon manipulation still remains challenging due to the burden of exploration and long-term credit assignment~\cite{sutton1988learning}.

One effective approach for long-horizon manipulation is to leverage a hybrid control paradigm, where the agent is only responsible for local manipulation skills, instead of the full task~\cite{mandlekar2023hitltamp, dalal2024psl}.
An example is the HITL-TAMP system~\cite{mandlekar2023hitltamp}, where an agent is trained with IL on small contact-rich segments of each tasks, and the rest of the task is performed using Task and Motion Planning (TAMP)~\cite{garrett2021integrated}. 
Another related approach is PSL~\cite{dalal2024psl}, which learns an agent using RL instead of IL on small segments, and uses motion planning to sequence the learned skills together.
These approaches are effective for challenging long-horizon manipulation tasks, but they still often do not train perfect policies, suffering from some of the pitfalls of IL and RL. In this paper, we take inspiration from these approaches and create a hybrid control learning framework that allows for efficient imitation learning and RL-based finetuning of agents to address long-horizon manipulation tasks.

We introduce \sysFull (\sysName), a system for solving challenging long-horizon manipulation tasks through efficient imitation learning and RL-based finetuning.
\sysName decomposes each task into local manipulation segments that are learned with a policy and segments handled by a TAMP system. 
The manipulation segments are first trained via imitation learning and then finetuned with reinforcement learning.
Our approach on 9 challenging manipulation tasks reaches an average success rate of 87.8\%, vastly outperforms TAMP-gated IL~\cite{mandlekar2023hitltamp} (52.9\%) and RL~\cite{dalal2024psl} (37.6\%). In the subset of tasks where \sysName and IL both reach a high success rate, our method only uses $59\%$ of the steps required by IL to complete the task. 
In \textit{Tool Hang}, \sysName fine-tunes an IL policy with only $10\%$ success rate to $94\%$.
We perform a thorough analysis of our method and also show that in many cases, a handful of demonstrations suffice for learning challenging tasks. 
Compared with IL, \sysName improves the overall demo efficiency by $5.8$ times in the evaluated subset of tasks.

\ajay{Make sure we stress there is no reward design here (e.g. sparse reward from task completion, and emphasize it through the rest of the paper too, where appropriate}

\textbf{Our contributions are as follows:}
\newline\noindent $\bullet$ We propose \sysName, a hybrid learning-planning system that synergistically integrates the strengths of behavior cloning, reinforcement learning, and manipulation planning. \sysName first learns a TAMP-gated policy with BC and then improves it with RL.
\newline\noindent $\bullet$ We introduce key insights to enable RL-based finetuning with sparse rewards in this regime, including a mechanism to warmstart the RL process using the trained BC policy, a way to constrain exploration to be close to the BC agent behavior, and a multi-worker TAMP framework to optimize the throughput of \sysName's RL process.
\newline\noindent $\bullet$ We evaluate \sysName on a suite of long-horizon contact-rich tasks and find that it outperforms prior hybrid learning-planning approaches \textbf{success rate averaged across tasks} (87.8\%, compared to 52.9\% and 37.6\%), \textbf{execution efficiency} (episodes are only 59\% the length of the BC agent), and \textbf{human demonstration efficiency} (6 times less data required than BC to train similar agents).

\section{Related Work}

\begin{figure}[t!]
    \centering
    \includegraphics[width=\textwidth]{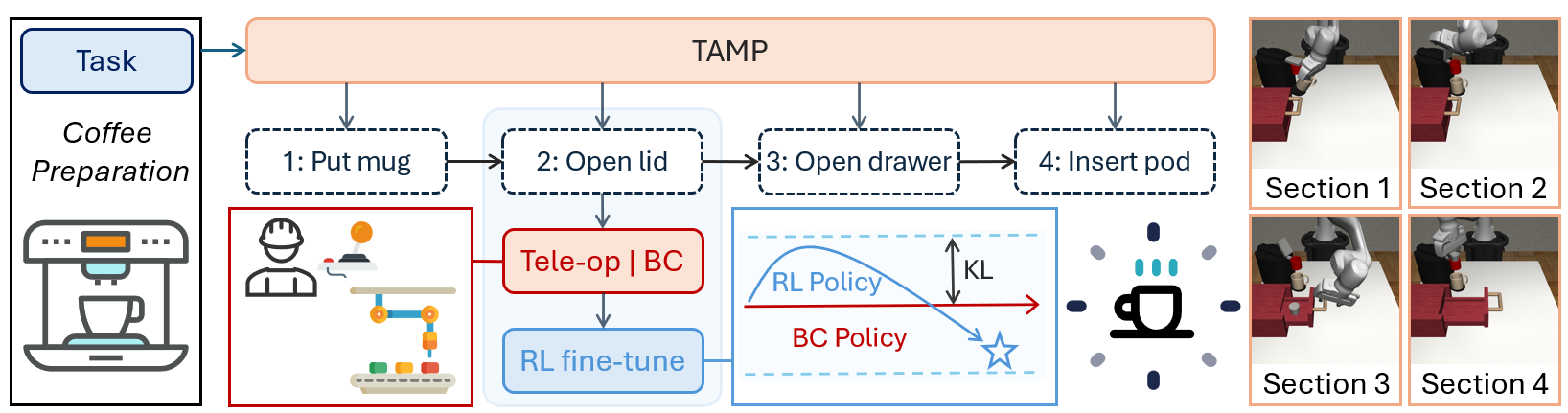}
    \caption{\textbf{\sysName Overview.} (\textbf{Left}) 
    \sysName first attempts to solve the task with a TAMP system. When the TAMP planner encounters an action deemed too hard to plan, it then enters the \textit{handoff} section and delegates the action to a human teleoperator to manually complete it. We record the trajectories from the human operators to build a demonstration dataset and train an IL policy with it. Finally, we train an RL policy to fine-tune the IL policy via warmstarting and deviation constraining. (\textbf{Right}) The four handoff sections in \textit{Coffee Preparation}. }
    \label{fig:overview}
\end{figure}

\textbf{Hierarchical approaches for long-horizon tasks.} 
Hierarchical approaches decompose the challenging long-horizon tasks into easier-to-solve subtasks. RL based methods explore the division of sub-tasks with reusable skills~\citep{Dayan1992FeudalRL, Sutton1999BetweenMA, Bacon2017TheOA, Vezhnevets2017FeUdalNF, Nachum2018DataEfficientHR}. \citep{Kulkarni2016HierarchicalDR, Lyu2019SDRLIA, Rafati2019LearningRI, Sohn2020MetaRL, Costales2022PossibilityBU} build hierarchical RL solutions with subpolicies and metacontrollers. Our work instead leverages a planner that provides guidance on which policies to learn as well as initial and terminal state distributions of tasks, compared to bottom-up HiRL methods, which tend to be data inefficient. Notably this top-down breakdown may also be achieved with a Language Model which can provide a plan composed of steps and sub-goal targets~\cite{driess2023palm,singh2023progprompt,saycan2022arxiv,yoshikawa2023large,2024arXiv240104157S,dalal2024psl}.

\textbf{Robot manipulation with demonstrations.} 
Behavior cloning (BC)~\cite{bc} learns a policy by directly mapping observations to actions and is typically trained end-to-end using pre-collected pairs of observation and behavior data. While this is seemingly a supervised learning problem, the context of robotics adds challenges. BC datasets tend to contain data sampled from \textit{multimodal distributions}, due to intra-expert variations. Recent work address this problem with implicit models including those derived from energy-based models~\cite{florence2022implicit, jarrett2020strictly}, diffusion models~\cite{diffusionpolicy, ze20243d, chen2023playfusion, xian2023chaineddiffuser, janner2022planning}, and transformer models~\cite{bet, cond-bet, vq-bet}. Another challenge is the \textit{correlation in sequential data}, which can lead to policies which are susceptible to temporally correlated confounders \cite{swamy2022causal}. Recently several works have set out to handle this by predicting action chunks. For example, the Action Chunking Transformer (ACT) line of work \cite{act, fu2024mobilealoha} and diffusion policy \cite{diffusionpolicy}. While BC-based methods combined with high-capacity models enable complex robotics tasks from demonstrations, problems in robustness and long-horizon generalization remain. 

Experts and their demonstrations can be used in combination with RL in multiple ways, including acting as task specifications, exploration guidance, and data augmentation. 
Inverse RL~\cite{Ho2016GenerativeAI, Ho2016ModelFreeIL, Finn2016ACB} learns a reward model for RL from demonstrations; \citep{Hester2017LearningFD, Vecerk2017LeveragingDF, goecks2020integrating} explore using demonstrations to bootstrap the reinforcement learning process; \citep{haldar2023teach} uses state matching for reward computation in RL. \citep{Nair2017OvercomingEI} shares a similar setup with ours, where they also warmstart RL with a BC policy and use a masked BC loss the constrain the RL policy from deviating. \citep{Johannink2018ResidualRL, Silver2018ResidualPL} propose to fine-tune a semi-expert initial policy by training a residual policy on top of it with RL. Different from the mentioned works, we focus on multi-stage robotic manipulation tasks with high-dimensional input spaces.

\rebuttal{}{\textbf{Learning for Task and Motion Planning.} Task and Motion Planning (TAMP) is a search and planning-based approach for sequential manipulation tasks~\cite{garrett2021integrated,kaelbling2011hierarchical,toussaint2018differentiable,garrett2020pddlstream}. However, pure TAMP-based methods suffer from reliance on accurate environment modeling and prior knowledge of the skills, making those methods less suitable for tasks involving complex contact-rich skills.}

\ajay{Removed this paragraph of related work for now, see latex doc}

\section{Method}
\label{sec:method}

Our approach \sysFull (\sysName) learns and deploys closed-loop visuomotor skills within a TAMP system (see Fig.~\ref{fig:overview}).
First we frame our problem as a policy learning problem across a sequence of Markov Decision Processes (Sec.~\ref{subsec:problem}).
Next, we describe our approach for incorporating both classical and learned robot skills into TAMP (Sec.~\ref{subsec:tamp}) to enable TAMP-gated learning.
Next, we describe how we train an initial agent with TAMP-gated Behavioral Cloning (BC) (Sec.~\ref{subsec:tamp_gated_control}).
We then propose an RL-based finetuning algorithm to improve the BC agent with RL (Sec.~\ref{subsec:rl_finetuning}).
Finally, we introduce a parallelized training scheduler that is able to intelligently manage dependencies among stages when conducting RL in our setting (Sec.~\ref{subsec:multi_worker_tamp}). 

\subsection{Problem Formulation}
\label{subsec:problem}

In our setup, each robot manipulation task can be decomposed into a series of alternating TAMP sections and \emph{handoff sections}, where TAMP delegates control to a trained agent $\pi$. 
These sections are \emph{TAMP-gated}~\cite{mandlekar2023hitltamp}, as they are chosen at the discretion of the TAMP system, and typically involve skills that are difficult to automate with model-based planning.
We wish to train an agent $\pi$ to complete these handoff sections efficiently and reliably.
We model our TAMP-gated policy learning problem as a series of Markov Decision Processes (MDPs), $\mathcal{M}:=(\mathcal{S},\mathcal{A}, T, \{r^i\}, \{p^i_0\}, \gamma)_{i=1}^N$, where $N$ is the number of MDPs (each corresponds to a TAMP handoff section), $\mc{S}$ and $\mc{A}$ are the state and action space, $T$ is the transition dynamics, $r^i(s)$ and $p^i_0$ are the $i$-th reward function and initial state distribution, and $\gamma$ is the discount factor.
The start and end of each handoff section is chosen by TAMP, consequently, TAMP determines the initial state distribution $p^i_0$ for each handoff section, and the reward function $r^i(s)$, which is a sparse 0-1 success reward based on the successful completion of the section.
Our goal is to train a stochastic policy $\pi:\mc{S}\rightarrow \Delta_\mc{A}$ that maximizes the expected return $J(\pi):=\Esubarg{(i, \tau)\sim \pi}{\sum_{t=0}\gamma^t r^i_t}$.
We next describe the TAMP system.

\subsection{TAMP with Learned Skills}
\label{subsec:tamp}

Task and Motion Planning (TAMP)~\cite{garrett2021integrated} is a model-based approach for synthesizing long-horizon robot behavior.
TAMP integrates discrete (symbolic) planning with continuous (motion) planning to plan hybrid discrete-continuous manipulation actions.
Essential to TAMP is a model of the actions that a planner can apply and how these actions modify the current state.
From such a model, TAMP solvers can search over the space of plans to find a sequence of actions and their associated parameters that satisfies the task.

In \sysName, we seek to \rebuttal{}{implicitly} learn a select set of TAMP actions that are impractical to manually model and then combine them with traditional actions through planning.
In essence, our strategy is to learn policies $\pi$ that control the system from TAMP {\em precondition} states to {\em postcondition} states \rebuttal{}{(typically described by {\em effects})}.
\rebuttal{}{
We adopt the modeling strategy introduced by~\citet{mandlekar2023hitltamp} and deploy PDDLStream~\cite{garrett2020pddlstream} to solve each TAMP problem.
See Appendix~\ref{app:tamp} for a summary of our planning model.
Fig.~\ref{fig:sequence} visualizes an interleaved execution of TAMP trajectory control and RL-learned policy control in the \textit{Tool Hang} domain.
Here, the planning model explicitly models the \pddl{pick} as well as intermediate \pddl{move} actions but defers the \pddl{insert} and \pddl{hang} actions to the RL agent.
}

\rebuttal{}{
Algorithm~\ref{alg:policy} describes the \sysName policy at test time.
It begins by observing the state.
If the state satisfies the task's goal conditions, \sysName terminates successfully.
Otherwise, \sysName invokes TAMP to plan a sequence of traditional and learned actions.
\sysName executes the trajectory associated with each traditional action until it reaches the first learned action.
At that time, \sysName executes the closed-loop policy associated with the learned action until it achieves its subgoal condition.
To account for the stochastic outcome of the policy, \sysName replans and repeats this process.
}

\subsection{TAMP-Gated Imitation Learning}
\label{subsec:tamp_gated_control}

\begin{wrapfigure}{r}{0.35\textwidth}
\vspace{-1.0in}
\begin{minipage}{\linewidth}
\begin{algorithm}[H]
    \begin{scriptsize}
    \caption{\sysName} %
  \label{alg:policy}  
  \begin{algorithmic}[1] %
    \Procedure{SPIRE}{$G$}
        \While{\True}
            \State $s \gets \proc{observe}()$ %
            \If{$s \in G$} %
                \State \Return \True %
            \EndIf
            \State $\vec{a} \gets \proc{plan-tamp}(s, G)$ %
            \For{$a \in \vec{a}$} %
                \If{$a.\id{type} = \text{``RL"}$} %
                    \State $\pi \gets a.\id{policy}$
                    \State $\proc{execute-policy}(\pi)$
                    \State \Break %
                \Else %
                    \State $\tau \gets a.\id{trajectory}$
                    \State $\proc{execute-trajectory}(\tau)$
                \EndIf
            \EndFor
        \EndWhile
    \EndProcedure
\end{algorithmic}
\end{scriptsize}
\end{algorithm}
\end{minipage}
\vspace{-0.4in}
\end{wrapfigure}

\textbf{TAMP-Gated Data Collection.}
We collect an initial dataset of human demonstrations through TAMP-gated human teleoperation, where the human operator collects demonstrations for handoff sections when prompted by TAMP, 
to form the demonstration dataset $\mc{D}=\{\{(s_t,a_t)_{t=1}^{H_i}, g_i\}\}$, where $s_t\in \mc{S}$, $a_t\in\mc{A}$ and $H_i$ is the horizon, and $g_i$ is the handoff section of the $i$-th trajectory.
To improve the data collection efficiency, we replicate the task queuing system from \citep{mandlekar2023hitltamp}. 

\textbf{TAMP-Gated Behavioral Cloning.}
Given the dataset $\mc{D}$, we train a Behavioral Cloning (BC) policy parameterized by $\phi$ to minimize the negative log-likelihood loss over the demonstration dataset: $\phi^*{=} \argmin_\phi \Esubarg{(s,a)\sim \mc{D}}{-\log\pi_\phi(a|s)}$.
The trained BC agent $\pi_\phi$ may have substantial room for improvement, depending on the complexity of the task, and the number of demonstrations available for training. We next describe our RL-based finetuning procedure (Sec.~\ref{subsec:rl_finetuning}) that allows this agent to be improved through reinforcement learning.

\subsection{RL Finetuning}
\label{subsec:rl_finetuning}

\begin{figure}[t!]
    \centering    
    \includegraphics[width=0.7\textwidth]{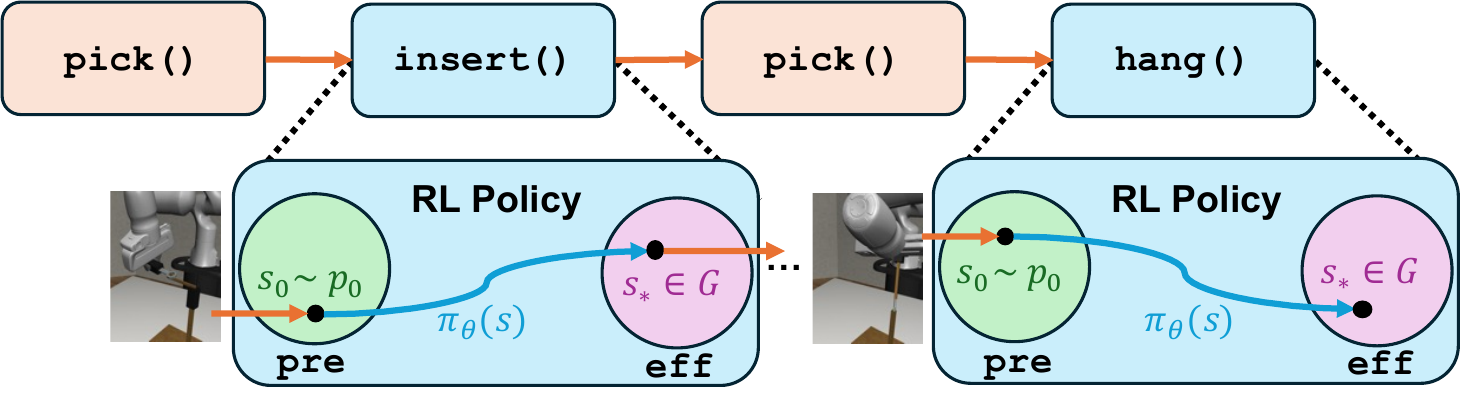}
    \caption{\textbf{\sysName execution.} \sysName computes a TAMP plan but defers execution of certain contact-rich skills, such as \pddl{insert} and \rebuttal{\pddl{hand}}{\pddl{hang}}, to learned agents -- we call these handoff sections. The preconditions of each handoff section define the initial state distribution of the agent, and the postconditions of each action correspond to the termination states of the corresponding MDP for the handoff section.}
    \label{fig:sequence}
\end{figure}

Given a trained BC agent $\pi_{\phi}$, we wish to train an RL agent $\pi_{\theta}$ to improve performance further.
To avoid reward engineering, we only assume access to sparse 0-1 completion rewards for each handoff section provided by TAMP (Sec.~\ref{subsec:problem}). However, exploration in sparse-reward settings has been shown to be challenging~\cite{Stadie2015IncentivizingEI, Tang2017ExplorationAS, Badia2020NeverGU, Ecoffet2021FirstRT}, especially in continuous state and action spaces.
Fortunately, we can use the BC policy trained in the previous section as a reference point for exploration -- we want to restrict the behavior of the RL policy to be in a neighborhood of the BC policy. This is achieved by a) warmstarting the RL policy optimization using the BC policy, and b) enforcing a constraint on the deviation between the RL policy and the BC policy.

\textbf{Warmstarting RL optimization with BC.}
We tested two ways to warmstart the RL agent.
\emph{Initialization.} One method is to initialize the weights of the RL agent with those of the trained BC agent, $\theta \gets \phi^*$, where $\phi^*=\arg\min_\phi L_{BC}(\phi)$, and subsequently finetune the weights with online RL objectives.
Despite being easy to implement, this can be less flexible since it requires the agent structure of the RL and BC policies to match. 
Furthermore, researchers have found that retraining neural networks with different objectives can cause the network to lose plasticity~\cite{Abbas2023LossOP}, which can make the policy harder to optimize because of the objective shift from BC to RL.
\emph{Residual Policy.} 
An alternative way is to fix the BC policy as a reference policy and train a residual policy on top of it. Let the residual policy be $\pi^+_\theta(s)$. The residual policy shares the same action space as the normal policy but is initialized to close to zero. The final action is defined as a summation of the reference action $a\sim \pi_{\phi^*}(s)$ and the residual action $a^+\sim \pi^+_\theta(s)$. In practice, we only add the mean of the reference policy to the residual action instead of sampling the reference action.

\textbf{Constraining Deviation between BC and RL agents.} 
\ajay{Can we cite something here?}
The sparsity of reward signals produces high-variance optimization objectives, which can lead the RL policy to quickly drift away from BC and lose the exploration bonus from warmstarting. 
Therefore, it is critical to constrain the policy output to be close to the BC agent throughout the training process.
We achieve this by imposing a KL-divergence penalty.
We conclude our RL optimization objective as follows: $J_{FT}(\theta):=J(\pi_\theta)-\alpha D_{KL}(\pi_\theta\|\pi_{\phi^*})$,
where $D_{KL}(p\| q):=\Esubarg{(s, a)\sim p}{\log\frac{p(a|s)}{q(a|s)}}$ and $\alpha$ is the weight for the penalty term.

\subsection{Multi-Worker Scheduling Framework}
\label{subsec:multi_worker_tamp}

Making our TAMP-gated framework compatible with modern reinforcement learning procedures requires addressing several challenges.
First, TAMP can take dozens of seconds for a single rollout, which severely lowers the throughput of RL exploration. 
Second, the TAMP pipeline executes each section sequentially, which means that later handoff segments can only be sampled when previous handoff segments are completed successfully.
This leads to an imbalance of episodes for the different handoff segments and is potentially problematic for the RL agent.
\notforleap{W}{In light of these challenges, w}e propose a multi-worker TAMP scheduling framework to integrate TAMP into RL fine-tuning. 
The framework consists of three components -- a group of TAMP workers that run planning in parallel, a status pool that stores the progress of the workers, and a scheduler that distributes tasks to the workers and balances the initial states.
We further describe how the framework allows for curriculum learning, and how the framework accelerates learning efficiency for RL training. \rebuttal{}{See Appendix~\ref{app:bridging} for more details.}

\begin{wrapfigure}{R}{0.5\textwidth}
\vspace{-0.7in}
\begin{minipage}{\linewidth}
\centering
\begin{algorithm}[H]
  \caption{Scheduler}
  \label{alg:scheduler}
  \begin{scriptsize}
  \begin{algorithmic}[1] %
    \Procedure{Scheduler}{\textsc{Workers}, \textsc{StatusQueue}, \textsc{Policy}, \textsc{Strategy}} %
        \While{True}
            \State $i, j \gets \textsc{StatusQueue}.pop()$ %
            \If {$\textsc{Strategy}.accepts(j)$} 
                \While{\textbf{not} $\textsc{Workers}[i].done()$} %
                    \State $s_{obs} \gets \textsc{Workers}[i].observe()$
                    \State $a \gets \textsc{Policy}.act(s_{obs})$
                    \State $\textsc{Workers}[i].step(a)$
                \EndWhile
            \Else
                \State $\textsc{Workers}[i].reset()$ %
            \EndIf
        \EndWhile
    \EndProcedure
\end{algorithmic}
\end{scriptsize}
\end{algorithm}
\end{minipage}
\vspace{-0.4in}
\end{wrapfigure}
\textbf{TAMP workers.} 
Each TAMP worker has an environment instance and repeatedly runs a TAMP planner. Upon reset, the TAMP worker initiates TAMP until a handoff section has been reached. It then sends a pair \texttt{(\#worker, \#section)} representing its ID and which handoff section it has entered to a FIFO status queue, indicating that it is ready to take RL agent actions. The worker then enters an idle state until it receives a command from the scheduler. Depending on the command, the worker either resets itself or starts interacting with the environment by exchanging actions and states with the scheduler. 
If the current section is solved, the worker sends a success signal to the scheduler and runs TAMP until the next handoff section.

\textbf{Scheduler.} (Algorithm~\ref{alg:scheduler})
The scheduler is a centralized component that manages the TAMP workers. It also provides an environment abstraction to the single-threaded RL process. The scheduler is configured with a sampling strategy. Upon initialization, it first pops an item from the status queue. According to the sampling strategy, the scheduler either rejects this section, in which case it sends a resetting signal to the corresponding worker; or starts a new episode and interacts with the worker.

\textbf{Curriculum Learning.} 
The behavior of the scheduler depends on a sampling strategy, allowing it to function as a \emph{curriculum} for the RL agent. We consider two strategies: 
\texttt{permissive} is the default strategy that allows all sections through, while \texttt{sequential} only accepts a section when the success rate of passing all the previous sections reaches a threshold. 
\texttt{sequential} allows controlling the initial state distribution during the early stages of training, to ensure the RL agent achieves proficience in each section sequentially before continuing onto the next section.

\textbf{Remarks on Efficiency.} 
Suppose a TAMP planning process takes at most $T$ seconds over the episode; each environment interaction step, counting communication latency, takes at least $t$ seconds; and each handoff section is at least $H$ steps. If the number of TAMP workers $n\geq \frac{T}{tH}$, the proposed multi-worker TAMP scheduling framework reaches a throughput of at least $1/t$ frames per second. In comparison, the single-worker counterpart has a worst-case throughput of $\frac{H}{T+tH}$ frames per second. Suppose that the planning process is slower than the handoff sections by a factor $k$ (e.g. $T=k\cdot tH$), then our framework is faster than the single-worker alternative by a factor of $k+1$.

\section{Experiments}
\begin{figure}[!t]
  \centering

\begin{minipage}{0.42\textwidth}
    \centering
    \includegraphics[width=\linewidth]{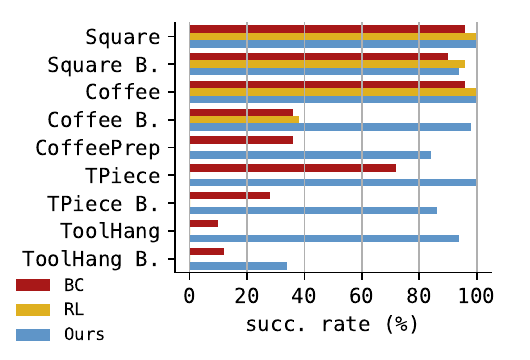}
\end{minipage}
\begin{minipage}{0.45\textwidth}
    \centering
    \resizebox{\textwidth}{!}{%

        \begin{tabular}{lrrr}
            \toprule
            \rowcolor[HTML]{CBCEFB} 
            \textbf{Episode Duration:} & \textbf{BC}~\cite{mandlekar2023hitltamp} & \textbf{RL}~\cite{dalal2024psl} & \textbf{Ours}  \\
            \midrule
            Square & $18.1$ & $\mathbf{8.3}$ &  $11.6$ \\
            \rowcolor[HTML]{EFEFEF}
            Square Broad & $24.5$ & $\mathbf{8.4}$ &  $13.6$ \\
            Coffee & $63.1$ & $\mathbf{15.0}$ & $38.4$ \\
            \rowcolor[HTML]{EFEFEF}
            Coffee Broad & $80.6$ & $\mathbf{25.7}$ &  $61.3$ \\
            Coffee Preparation & $193.3$ &  - &  $\mathbf{168.5}$ \\
            \rowcolor[HTML]{EFEFEF}
            Three Piece & $58.7$ &  - &  $\mathbf{34.0}$ \\
            Three Piece Broad & $62.2$ & - & $\mathbf{38.1}$ \\
            \rowcolor[HTML]{EFEFEF}
            Tool Hang & $81.8$ &  - &  $\mathbf{61.7}$ \\
            Tool Hang Broad & $130.5$ & - &  $\mathbf{109.8}$ \\
            \bottomrule
        \end{tabular}
    }
\end{minipage}

  \caption{\textbf{Full evaluation.} Comparing the success rates (\textbf{left}) and the average duration (\textbf{right}) of successful rollouts of  \textit{HITL-TAMP-BC} (\textbf{BC}), \textit{TAMP-gated Plan-Seq-Learn} (\textbf{RL}), and \textit{\sysName} (\textbf{Ours}) across all 9 tasks. Each datapoint is chosen from the best run out of 5 seeds and is averaged from 50 rollouts. \sysName improves the BC policy in terms of both success rate and average duration in all 9 tasks and reaches 80\% success rate in 8. RL has an advantage in average duration in the easier set of tasks but fails to learn anything in the rest.
  }
\label{fig:main}
\end{figure}

\textbf{Tasks.} 
For evaluation, we follow \citep{mandlekar2023hitltamp} and choose a set of long-horizon manipulation tasks, namely \textit{Square}, \textit{Coffee}, \textit{Three Piece}, and \textit{Tool Hang}. We also include the broad variants of those tasks, where we use a broad object initialization region, and \textit{Coffee Preparation}, which has the longest horizon with four handoff sections. See Appendix~\ref{app:tasks} for more details.

\textbf{Environment Details.} 
\textit{Observation space.}
For most tasks, we use a single $84\times 84$ RGB image from the wrist-view camera. For \textit{Tool Hang}, we use the front-view camera instead since the wrist-view is mostly occluded. For \textit{Tool Hang Broad} and \textit{Coffee Preparation}, we use both wrist-view and front-view cameras, as well as proprioception state (end-effector pose and gripper finger width). 
\textit{Action space.} 
Actions are 7-dimensional (3-dim delta end-effector position, 3-dim delta end-effector rotation, 1-dim gripper actuation).
\textit{Horizon.}
Each handoff section is limited to 100 steps (5 seconds with 20Hz control frequency) for all tasks, except for \textit{Tool Hang Broad}, where the limit is 200 steps.

\textbf{Baselines.} 
We compare our method with two baselines: \textit{HITL-TAMP-BC} (\textbf{BC}), which is adapted from \citep{mandlekar2023hitltamp} to match our network structure; and \textit{TAMP-gated Plan-Seq-Learn} (\textbf{RL}), which is adapted from \citep{dalal2024psl} by replacing the LLM-based planning system with our TAMP system for fair comparison. We collected 200 human demonstrations for each task to train the behavior cloning policy. For RL, we use DrQ-v2~\cite{yarats2021drqv2} as the base algorithm. See Appendix~\ref{app:learning_details} for more details.

\textbf{Evaluation.} 
We evaluate each trained agent for 50 rollouts and report the success rate and average completion steps in the successful rollouts. We train 5 seeds for each algorithm and report the best-performing agent (success rate-wise, tie-breaking with average steps) unless otherwise specified.

\subsection{Results}
\label{sec:main-result}

\begin{figure}[t!]
    \centering
    \begin{subfigure}[b]{0.49\textwidth}
        \includegraphics[width=\textwidth]{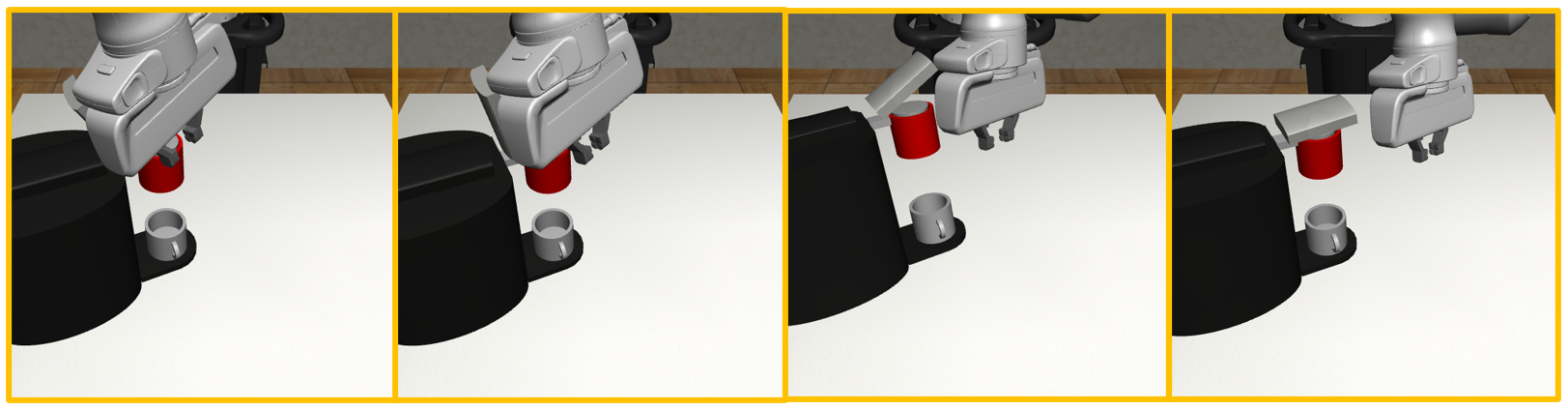}
        \caption{Naive RL agent}
        \label{fig:coffee-rl-rollout}
    \end{subfigure}
    \hfill
    \begin{subfigure}[b]{0.49\textwidth}
        \includegraphics[width=\textwidth]{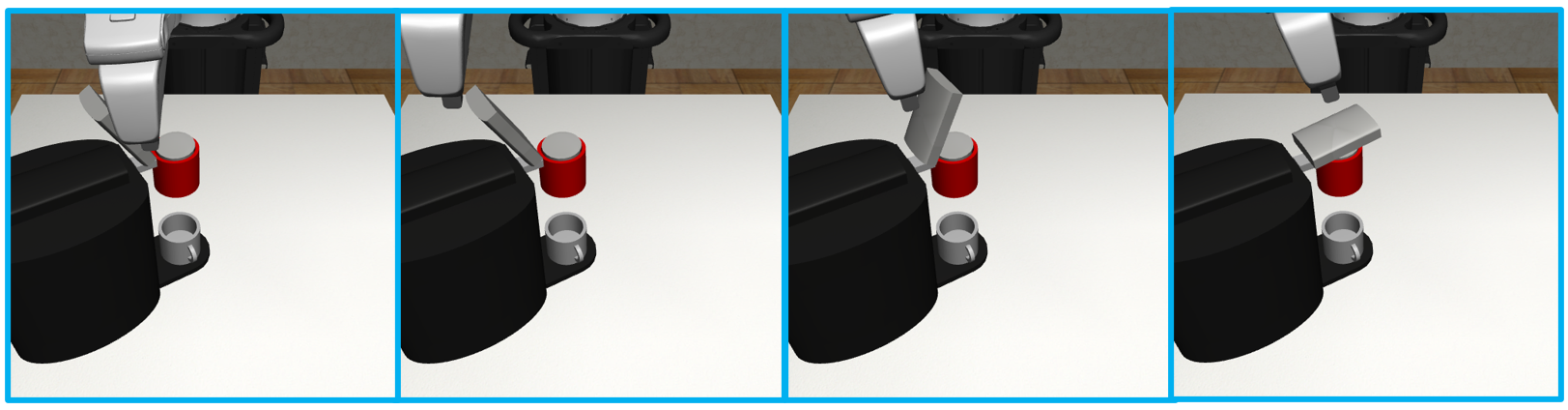}
        \caption{\sysName agent}
        \label{fig:coffee-ft-rollout}
    \end{subfigure}
    \caption{\textbf{Qualitative comparison.} Rollouts of vanilla RL vs our method. The first agent attempts to close the lid by knocking the coffee machine, while our agent follows the demonstrations and closes the lid with fingers.}
    \label{fig:coffee-rollout}
\end{figure}

\textbf{\sysName outperforms both TAMP-gated BC and RL.} We compare our method with the TAMP-gated BC~\cite{mandlekar2023hitltamp} and RL~\cite{dalal2024psl} baselines across all 9 tasks (see Fig.~\ref{fig:main}).
\sysName reaches 80\% success rate in 8 out of 9 tasks, while BC and RL only reach 80\% in 3 tasks each.
In \textit{Tool Hang}, our method reaches $94\%$ success rate despite the BC counterpart only having $10\%$, which is over $9$-times improvement. Remarkably, this low-performing BC agent is enough to help address the exploration burden (unlike RL, $0\%$ success) and train a near-perfect agent. Across all 9 tasks, \sysName averages a $87.8\%$ success rate, while BC and RL only average $52.9\%$ and $37.6\%$ respectively.

\textbf{\sysName produces more efficient agents than BC through RL fine-tuning.} \sysName agents have lower average completion times than their BC and RL counterparts (Fig.~\ref{fig:main}, right). Even in tasks such as \textit{Square}, \textit{Square Broad}, \textit{Coffee}, \textit{Three Piece}, where BC policies already have high success rates, our method improves the efficiency by only using an average of $59\%$ completion time.

\textbf{\sysName's use of the BC agent helps address the RL exploration burden on challenging long-horizon tasks.}
Exploration in RL with sparse rewards is extremely challenging, especially for robot manipulation tasks for their continuous and high-dimensional observation and action space. 
Our method solves the initial exploration problem by anchoring policy learning around the BC agent. As shown in Figure~\ref{fig:main}, RL policies without utilizing BC only reach nonzero success rates in \textit{Square}, \textit{Coffee} and their \textit{Broad} variants, all of which have only one handoff section and relatively shorter horizons. Even in \textit{Coffee Broad}, RL encounters exploration difficulties due to the broader object distribution, resulting in only partially solving the task.

\textbf{Qualitatively, \sysName can improve agent behavior without introducing undesirable behavior, unlike RL.}
Safety awareness has always been a critical matter in robotics learning. 
Safety constraints can be hard to define with numerical values, which adds to the challenges of realizing safety in RL. 
We notice that in \textit{Coffee}, RL policy has a much shorter completion time than our method. This is at the cost of ignoring safety concerns. We compare two rollouts of RL and our method in Figure~\ref{fig:coffee-rollout}. The RL-trained policy attempts to close the lid by knocking the coffee machine with the arm, which can potentially damage the robot and the coffee machine and even cause danger to humans; while our method preserves safety awareness by following the demonstration's practice of closing the lid with its fingers.

\begin{figure}[t!]
    \centering
    \includegraphics[width=.37\textwidth]{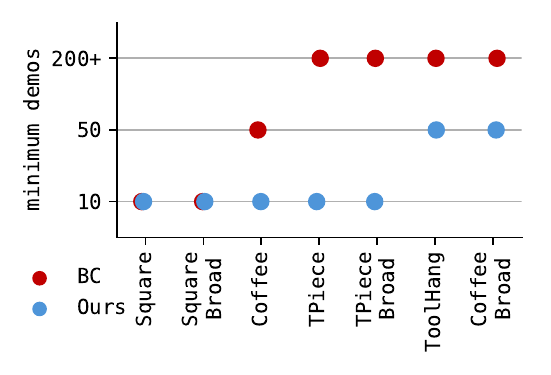}
    \includegraphics[width=.37\textwidth]{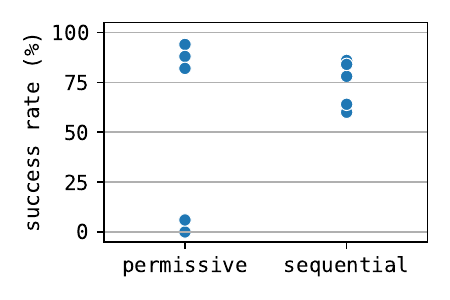}
    \vspace{-0.08in}
    \caption{\textbf{Demo efficiency and sampling strategy ablation.} (\textbf{Left}) Minimum number of demos needed to reach at least $80\%$ success rate. %
    (\textbf{Right}) Success rates across 5 seeds in \textit{Tool Hang}, comparing \texttt{permissive} and \texttt{sequential} strategies. \texttt{sequential} has a lower variance but \texttt{permissive} has the better top-1 policy. 
    }
    \label{fig:demo-eff}
        \label{fig:sample-strat}
\end{figure}

\textbf{\sysName can train proficient agents using just a handful of human demonstrations.}
BC methods can require several human demonstrations to train proficient agents, which can be a major drawback due to the cost of collecting this data~\cite{mandlekar2021matters}.
We reduce the number of human demonstrations used by \sysName to 10 and 50 (instead of 200 as in Fig.~\ref{fig:main}), and we plot the minimum of demonstrations needed to reach at least $80\%$ success rate in Fig.~\ref{fig:demo-eff}. As the plot shows, \sysName can successfully fine-tune a BC policy trained with as few as 10 demos in all evaluated tasks except for \textit{Tool Hang} and \textit{Coffee Broad}, for which 50 demos are enough. In the 7 tasks, our method needs 150 demos in total, while BC needs more than 870, a $5.8\times$ improvement in efficiency.

\subsection{Ablation Study}
We conduct two ablative studies to investigate (1) the value of the KL-divergence penalty and (2) the value of curriculum learning, governed by the two scheduler sampling strategies \texttt{permissive} and \texttt{sequential}. 
In this section, we compare the performance distribution of the 5 runs instead of only the top-1 run for a more comprehensive evaluation.

\textbf{Value of divergence penalty.} 
We ablate the divergence penalty on two representative tasks, \textit{Three Piece} \& \textit{Tool Hang} (Table~\ref{tab:div-ablation}) and observe a drastic performance drop (84\% to 17.6\%, 74\% to 0\%).

\begin{wrapfigure}{R}{.44\textwidth}
\vspace{-0.37in}
  \centering
\scalebox{0.95}{
  \begin{tabular}{lrr}
      \toprule
Task & w/ (\%) & w/o (\%) \\
      \midrule
Three Piece & $\mathbf{84.0\ (34.7)}$ & $17.6\ (39.4)$ \\
Tool Hang & $\mathbf{74.4\ (11.8)}$ & $0.0\ (0.0)$  \\
    \bottomrule
  \end{tabular}
  }
    \caption{\textbf{KL-divergence penalty ablation.} Mean and standard deviation (in parenthesis) of success rates across the 5 seeds in \textit{Three Piece} and \textit{Tool Hang}, with and without KL-divergence penalty. In both tasks, the divergence penalty improves the performance by a large margin.}
\label{tab:div-ablation}
\vspace{-0.4in}
\end{wrapfigure}

The sparsity in rewards leads to high-variance optimization objectives for RL. As a result, even when warmstarted with BC, the RL policy can quickly deviate from it, especially when the chance of reaching the reward signal is low. Therefore, constraining the policy close to BC throughout the training is critical. We select two representative tasks, \textit{Three Piece} and \textit{Tool Hang} for this ablation. The result is shown in Figure~\ref{tab:div-ablation}. Without the divergence penalty, the RL policies deviated immediately and never returned.

\textbf{Value of curriculum learning.} 
We compare the two sampling strategies in \textit{Tool Hang} task. The result is shown in Figure~\ref{fig:sample-strat}. \texttt{sequential} strategy shows a much smaller variance, while \texttt{permissive} produces the better top-1 seed performance. The main difference between the two strategies is how the second section states emerge during training. For \texttt{permissive}, the second section states emerge gradually as the success rate of passing the first section gets higher, resulting in a more gentle distributional shift that leads to a higher overall success rate; for \texttt{sequential}, the shift is more abrupt, but it has fewer distraction states in the early stage, resulting in a more stable training process.

\section{Conclusion}

We presented \sysName, an integrated approach for deploying RL, BC, and planning harmoniously.
We showed how BC can be used to not only warm-start RL but also guide the RL process via focused exploration.
We introduced a scheduling mechanism to improve RL data throughput and increase learning efficiency.
Finally, we evaluated \sysName in simulation against recent hybrid learning-planning baselines and found that \sysName results in more successful and efficient policies.

\noindent \textbf{Limitations.} We focus on tasks that center around object-centric manipulation of rigid objects in table-top environments.
We assume that a human teleoperator can demonstrate the learned skills to warmstart RL.
The TAMP component assumes that the state is observable and comprised of rigid objects, possibly connected with articulation.
To simplify RL training, we only considered Markovian policies; however, using neural network architectures with history, such as RNNs, may boost performance~\cite{mandlekar2021matters}.
\rebuttal{}{Finetuning BC policy with RL requires the simulation to be efficient.} %

\clearpage
\acknowledgments{
The authors thank Vector Institute for computing.
}

\bibliography{main}  %

\clearpage

\appendix

\section{Overview}
\label{app:overview}

The Appendix contains the following content.

\begin{itemize}
    \item \textbf{Policy Learning Details} (Appendix~\ref{app:learning_details})): details on hyperparameters used
    \item \textbf{Ablation: \sysName without TAMP} (Appendix~\ref{app:ablation}): ablation study on the effect of removing TAMP-gating and directly running BC and RL fine-tuning
    \item \textbf{Comparison to Additional Methods} (Appendix~\ref{app:comparison}): comparison to other RL methods that leverage demonstrations
    \item \textbf{Tasks} (Appendix~\ref{app:tasks}): details on tasks used to evaluate \sysName
    \item \textbf{Variance Across Seeds} (Appendix~\ref{app:seeds}): discussion on the variance of results across different seeds and how results are presented
    \item \rebuttal{}{\textbf{TAMP Formulation}(Appendix~\ref{app:tamp}): details on the TAMP planner}
    \item \rebuttal{}{\textbf{Bridging TAMP Planner and RL} (Appendix~\ref{app:bridging}): details on how we integrate the TAMP planner with RL}
    \item \rebuttal{}{\textbf{Ablation: \sysName without Multi-Worker} (Appendix~\ref{app:multiworker}): ablation study on the effect of using multiple parallelized TAMP workers}
    \item \rebuttal{}{\textbf{Additional Experiment Results} (Appendix~\ref{app:rl-curves}): additional experiment results, including RL learning curves}
\end{itemize}

\newpage
\section{Policy Learning Details}
\label{app:learning_details}

\begin{table}[h]
\centering
    \caption{DrQ-v2 hyperparameters.}
    \label{tab:rl-hp}
    \begin{tabular}{lc}
        \toprule
        Network structure & CNN \\
        Learning rate  & 1e-4 \\
        Discount & 0.99 \\
        Batch size             & 256 \\
        $n$-step returns & 3 \\
        Action repeat & 1 \\
        Seed frames & 4000 \\
        Feature dim & 50  \\
        Hidden dim & 1024  \\
        Optimizer & Adam  \\
        \bottomrule
    \end{tabular}
\end{table}

\textbf{Hyperparameters}. The base RL algorithm for all our experiments is DrQ-v2~\cite{yarats2021drqv2}. The specific hyperparameters are in Table~\ref{tab:rl-hp}.

\textbf{Observation}. For most tasks, we use one $84\times 84$ RGB image from the wrist camera as the only observation. For \textit{Tool Hang}, we use a front-view camera instead since the wrist-view is heavily occluded. For \textit{Tool Hang Broad} and \textit{Coffee Preparation}, we use both camera views plus proprioception state (end-effector pose and gripper finger width). We use the default CNN structure from DrQ-v2 to encode the image observations. For tasks with multiple observations, we first encode the image observations each with an independent CNN network, then concatenate the CNN outputs alongside the low-dimensional observations such as proprioception states to form the feature vector.

\textbf{Action}. All of our tasks share a 7-dimensional continuous action space. It is models 6-DOF delta movement of the end-effector along with 1 dimension for finger control. The action is modeled as a normal distribution with a scheduled standard deviation.

\newpage
\section{Ablation: \sysName without TAMP}
\label{app:ablation}

We provide an additional ablation study on the high-level planner, TAMP. To do so, we treat the whole task as one handoff section. The agent only receives a reward of one if it completes the whole task. We collect 200 full demonstrations in \textit{Square}, train a BC policy, and apply \sysName to fine-tune the BC policy. Since the trajectory becomes longer and the robot now needs to handle object transportation, a single local wrist-view becomes insufficient. We thus include both the wrist view and the global front view, as well as the robot proprioception states in the observation for the w/o TAMP variant. The result is shown in Table~\ref{tab:tamp-ablation}. 

\begin{table}[h!]
  \centering
    \caption{Comparing the success rates of \textit{Square} and \textit{Square Broad} with and without TAMP.}
  \begin{tabular}{lrrr}
      \toprule
Task & \textbf{BC} & \textbf{RL} & \textbf{Ours} \\
      \midrule
Square \textit{w/} TAMP & 98\% & 100\% & 100\% \\
Square \textit{w/o} TAMP & 2\% & 0\% & 94\% \\
    \midrule
Square Broad \textit{w/} TAMP & 100\% & 100\% & 100\% \\
Square Broad \textit{w/o} TAMP & 0\% & 0\% & 0\% \\
    \bottomrule
  \end{tabular}
\label{tab:tamp-ablation}
\end{table}

Even though the w/o TAMP variant has more information from observations, the BC and RL policies are significantly worse than the w/ TAMP counterpart. The increased horizon makes the BC policy easier to drift away to regions less frequently visited in demonstrations and makes RL exploration much harder. In \textit{Square}, despite the low starting quality, \sysName still fine-tunes BC to reach a 94\% success rate, demonstrating the effectiveness of RL fine-tuning. However, when the initialization range increases in \textit{Square Broad}, even \sysName fails to find an acceptable policy.

In conclusion, TAMP (1) confines the agent-controlled section to a small local area, reducing the need for global information, and (2) decreases the horizon (11.6 w/ TAMP, 101.7 w/o TAMP in \textit{Square}) for the learned agent, reducing compounding errors and exploration difficulty.

\newpage
\section{Comparison to Additional Methods}
\label{app:comparison}

In each handoff section from TAMP, \sysName utilizes the demonstrations by training a behavior cloning agent and using RL to fine-tune it. There are alternative methods\todo{citations} to combine expert demonstrations and RL, which can be readily plugged in as replacements to \sysName. In this section, we make connections from our method to GAIL~\cite{Ho2016GenerativeAI}. The discriminator-based IRL reward in GAIL serves the same purpose as our KL penalty term - preventing the current policy from deviating from the expert policy. We draw further connection by showing that our KL penalty is the same as the IRL reward function in GAIL with an alternative discriminator objective and a different reward form.

Let $\pi_E$ be the expert policy. The IRL reward function in GAIL is $-\log(1-D(s,a))$, where $D:\mc{S}\times\mc{A}\rightarrow[0,1]$ is the discriminator that maximizes

\begin{equation*}
    J(D):=\Esubarg{\tau\sim \pi}{\log(1-D(s,a))}+\Esubarg{\tau\sim\pi_E}{\log(D(s,a))}
\end{equation*}

If we use an alternative objective:

\begin{equation*}
    \hat{J}(D):=\Esubarg{s\sim \pi_E, a\sim\text{Unif}}{-D(s,a)}+\Esubarg{\tau\sim\pi_E}{\log(D(s,a))}
\end{equation*}

The alternative objective discriminates $\pi_E$ from a fixed policy rather than the current learned policy $\pi$. Assume $\pi_E$ has full support, then maximizing $\hat{J}(D)$ is equivalent to maximize for every $s\in\mc{S}$:

\begin{align}
    \hat{J}_s(D):=&\Esubarg{a\sim\text{Unif}}{-D(s,a)}+\Esubarg{a\sim\pi_E(\cdot\mid s)}{\log(D(s,a))}\\
    =&-\left(\int D(s,a)\dee a\right)+\left(\int \pi_E(a\mid s)\log(D(s,a))\dee a\right)\\
    =&-\left(\int D(s,a)\dee a\right)+\left(\int \pi_E(a\mid s)\log{\pi_E(a\mid s)}\dee a\right)+\left(\int \pi_E(a\mid s)\log\frac{D(s,a)}{\pi_E(a\mid s)}\dee a\right)\\
    =&-\left(\int D(s,a)\dee a\right)+H( \pi_E(\cdot\mid s))+\left(\int \pi_E(a\mid s)\log\frac{D(s,a)}{\pi_E(a\mid s)}\dee a\right)\\
    \leq &-\left(\int D(s,a)\dee a\right)+H( \pi_E(\cdot\mid s))+\left(\int \pi_E(a\mid s)\left(\frac{D(s,a)}{\pi_E(a\mid s)}-1\right)\dee a\right) \label{eq:gibbs} \\ 
    = &-\left(\int D(s,a)\dee a\right)+H( \pi_E(\cdot\mid s))+\left(\int D(s,a)\dee a\right)-\left(\int \pi_E(a\mid s)\dee a\right) \\ 
    =& H(\pi_E(\cdot\mid s)) - 1 \label{eq:gibbs-2}.
\end{align}

where $H$ is the entropy. \eqref{eq:gibbs} holds since $\log x\leq x-1$ for all $x>0$, and only equates when $x=1$, i.e., $\hat{D}(s, a)=\pi_E(a\mid s)$. Since \eqref{eq:gibbs-2} is a constant, the maximum of $\hat{J}(D)$ can be taken when \eqref{eq:gibbs} equates, which means the optimal solution of $\hat{J}(D)$ is $\hat{D}(s, a)=\pi_E(a\mid s)$. Our KL penalty then is equivalent to using an IRL reward of $\log(\hat{D}(s, a))=\log\pi_E(a\mid s)$.

\newpage
\section{Tasks}
\label{app:tasks}

\begin{figure}[t!]
    \centering
    \includegraphics[width=.9\textwidth]{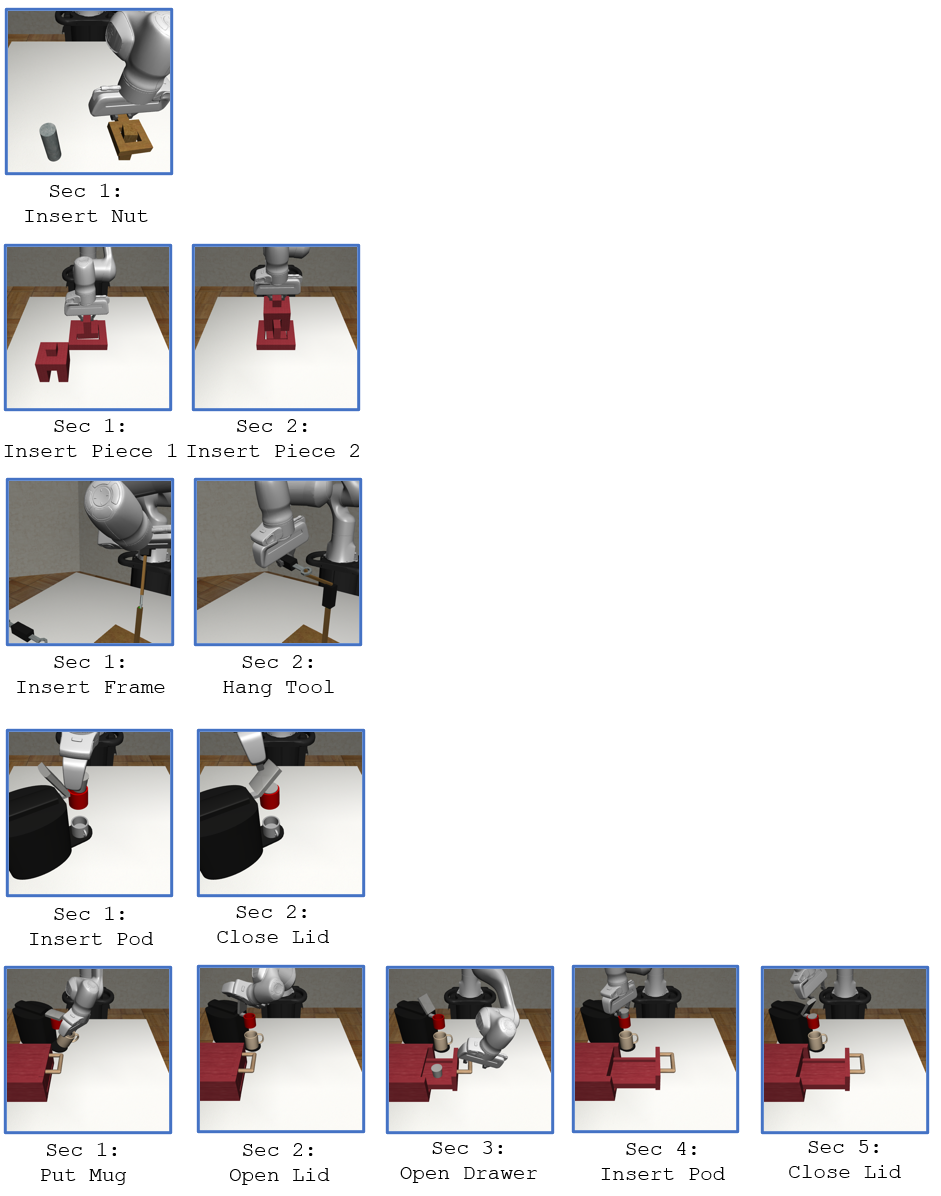}
    \caption{Handoffs per task, from top-to-bottom: \textit{Square}, 
    \textit{Three Piece}, \textit{Tool Hang}, \textit{Coffee}, \textit{Coffee Preparation}. \rebuttal{}{In practice, we merge section 1 \& 2, section 4 \& 5 in \textit{Coffee} and \textit{Coffee Preparation}, as there are no TAMP actions in between.}}
    \label{fig:handoff-section-all}
\end{figure}

\textbf{Square} and \textbf{Square Broad}. The robot must pick up a nut and place it onto a peg. This task has 1 handoff section, where the learned agent places the nut. The \textbf{Broad} version increases the initialization range of both the nut and the peg.

\textbf{Three Piece} and \textbf{Three Piece Broad}. The robot must insert one piece into a base and place another piece on top of the first. This task has 2 handoff sections, where the learned agent places the two pieces. The \textbf{Broad} version increases the initialization range of all three pieces including the base.

\textbf{Tool Hang} and \textbf{Tool Hang Broad}. The robot must first insert a L-shaped piece into a base to assemble a frame, then hang a wrench off of the frame. This task has 2 handoff sections, where the learned agent inserts the L-shaped piece and hangs the wrench. The \textbf{Broad} version increases the initialization range of all three pieces (base, L-shaped hook, and wrench).

\textbf{Coffee} and \textbf{Coffee Broad}. The robot must pick up a coffee pod, insert it into a coffee machine, and close the lid. This task has 1 handoff section where the learned agent inserts the pod and closes the lid. The \textbf{Broad} version increases the initialization range of the pod and the coffee machine.

\textbf{Coffee Preparation}. This is an extended version of \textbf{Coffee}. The robot must place a mug onto the coffee machine, open the lid, open the drawer with the coffee pod, pick up the pod, insert the pod into the coffee machine, and close the lid. This task has 3 handoff sections where the learned agent (1) places the mug and opens the lid, (2) opens the drawer, and (3) inserts the pod and closes the lid.

See Figure~\ref{fig:handoff-section-all} for an illustration of all the handoff sections.

\newpage
\section{Variance Across Seeds}
\label{app:seeds}

In Figure~\ref{fig:main}, we show the best run out of 5 seeds. Here we provide the mean and standard deviation of the success rates in Table~\ref{tab:main-variance}. We observe that although \sysName still outperforms BC in terms of mean success rate in most of the tasks, our method exhibits unusually high variances in some of the tasks, for example, \textit{Coffee}, \textit{Three Piece}, and \textit{Tool Hang}. In those tasks, one or more runs result in a performance significantly lower than the rest. Specifically,

\begin{itemize}
    \item In \textit{Coffee}, one run has 40\% success rate, while the rest are all 100\%;
    \item In \textit{Three Piece}, one run has 22\% success rate, while the rest are at least 98\%;
    \item In \textit{Tool Hang}, one run has 0\% success rate and one has 6\%, while the rest are at least 82\%.
\end{itemize}

\begin{table}[h!]
    \centering
    \caption{Mean and standard deviation (in parenthesis) of success rates out of 5 seeds.}
    \label{tab:main-variance}
    \begin{tabular}{lrrrr}
        \toprule
        \textbf{Task} & \textbf{BC} & \textbf{RL}~\cite{dalal2024psl} & \textbf{Ours}  \\
        \midrule
        Square & 92.4 (5.5) & 83.6 (36.7) & 99.2 (1.8) \\
        Square Broad & 96.4 (4.1) & 100.0 (0.0) & 96.4 (5.4) \\
        Coffee & 96.8 (4.1) & 40.0 (52.1)  &  88.0 (26.8)  \\
        Coffee Broad & 41.6 (6.7) & 23.2 (12.1) & 84.4 (8.3) \\
        Three Piece & 63.6 (6.7) & 0.0 (0.0) & 84.0 (34.7) \\
        Three Piece Broad & 25.2 (7.7) & 0.0 (0.0) & 78.4 (5.0) \\
        Tool Hang & 9.2 (4.6) & 0.0 (0.0) & 54.0 (46.8) \\
        \bottomrule
    \end{tabular}
\end{table}

Reinforcement learning methods are known to have high variances\todo{citation}, especially in sparse reward settings. \sysName partially alleviates this problem by enforcing the KL penalty for deviating from an anchor policy. However, in practice, such deviation can still happen. 

Figure~\ref{fig:toolhang_train} compares the training curve of a successful run (with 88\% final success rate) and a failed run (with 0\% final success rate). The policy in the failed run drastically deviated from the BC policy early on in the training. This is likely related to the unusually large policy gradient loss, which the KL penalty term was unable to match and failed to constrain the policy. 

\begin{figure}[h!]
    \centering
    \begin{subfigure}[b]{0.32\textwidth}
        \includegraphics[width=\textwidth]{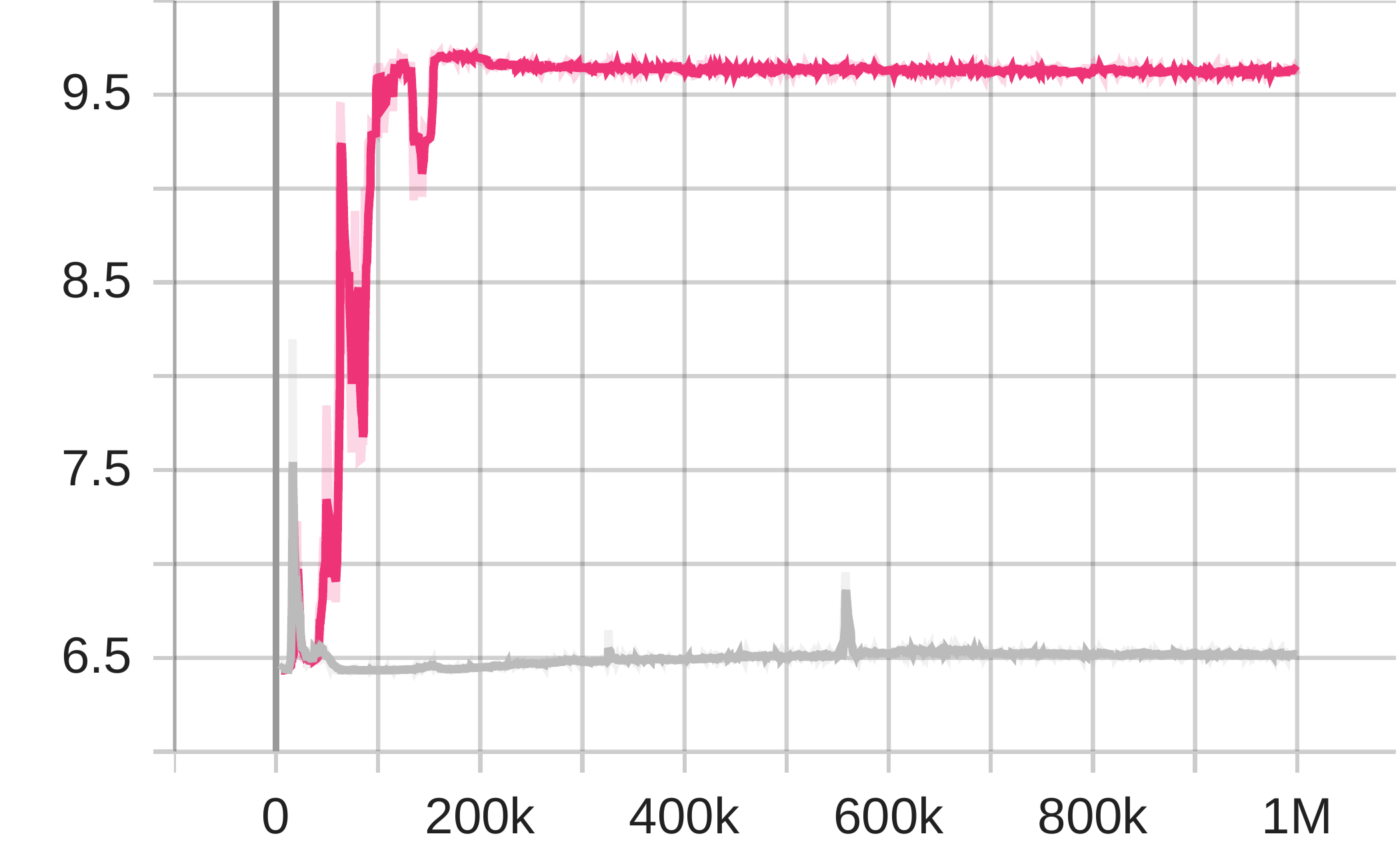}
        \caption{Deviation from BC}
        \label{fig:toolhang_train_div_loss}
    \end{subfigure}
    \hfill
    \begin{subfigure}[b]{0.32\textwidth}
        \includegraphics[width=\textwidth]{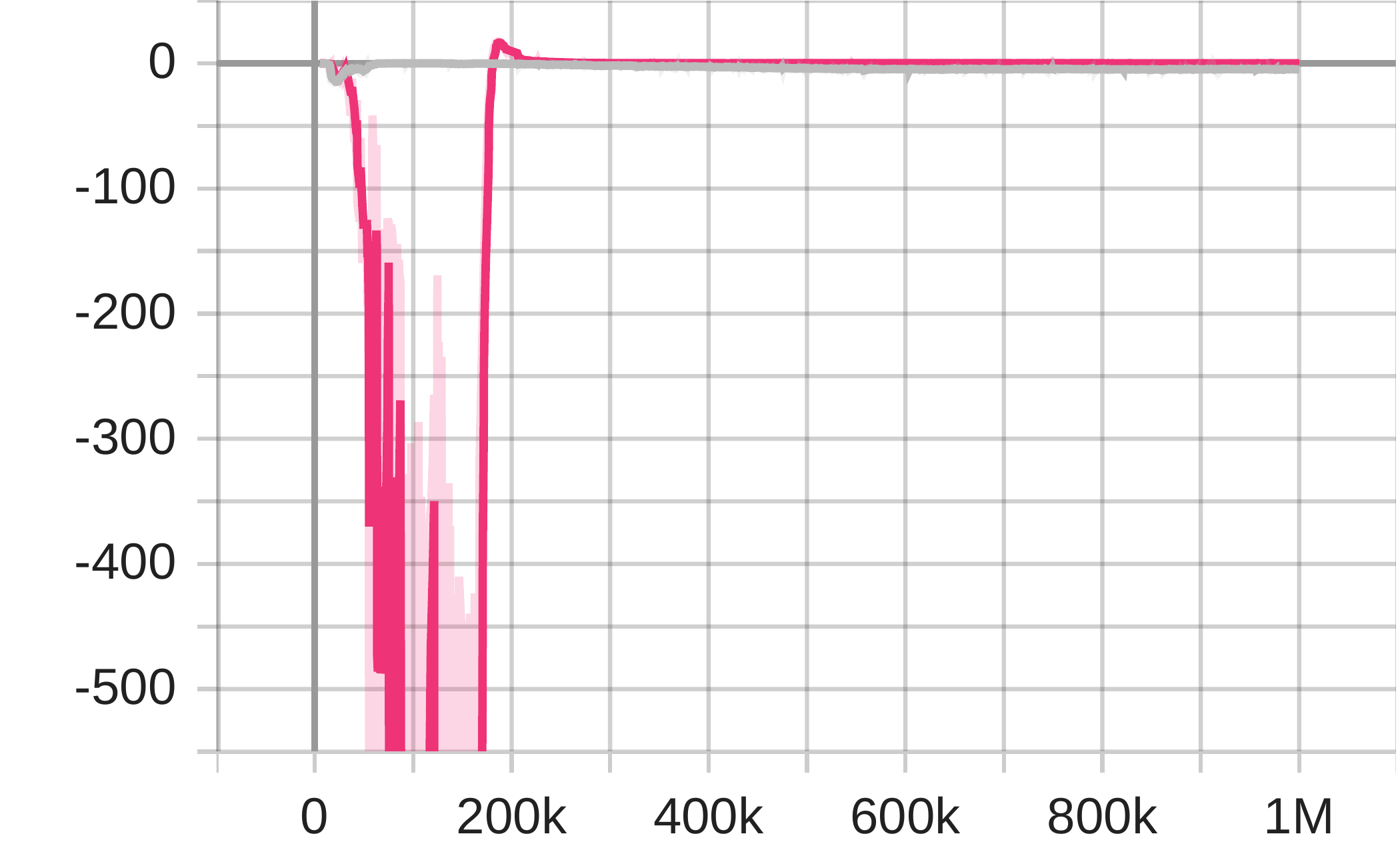}
        \caption{Policy gradient loss}
        \label{fig:toolhang_train_actor_loss}
    \end{subfigure}
    \hfill
    \begin{subfigure}[b]{0.32\textwidth}
        \includegraphics[width=\textwidth]{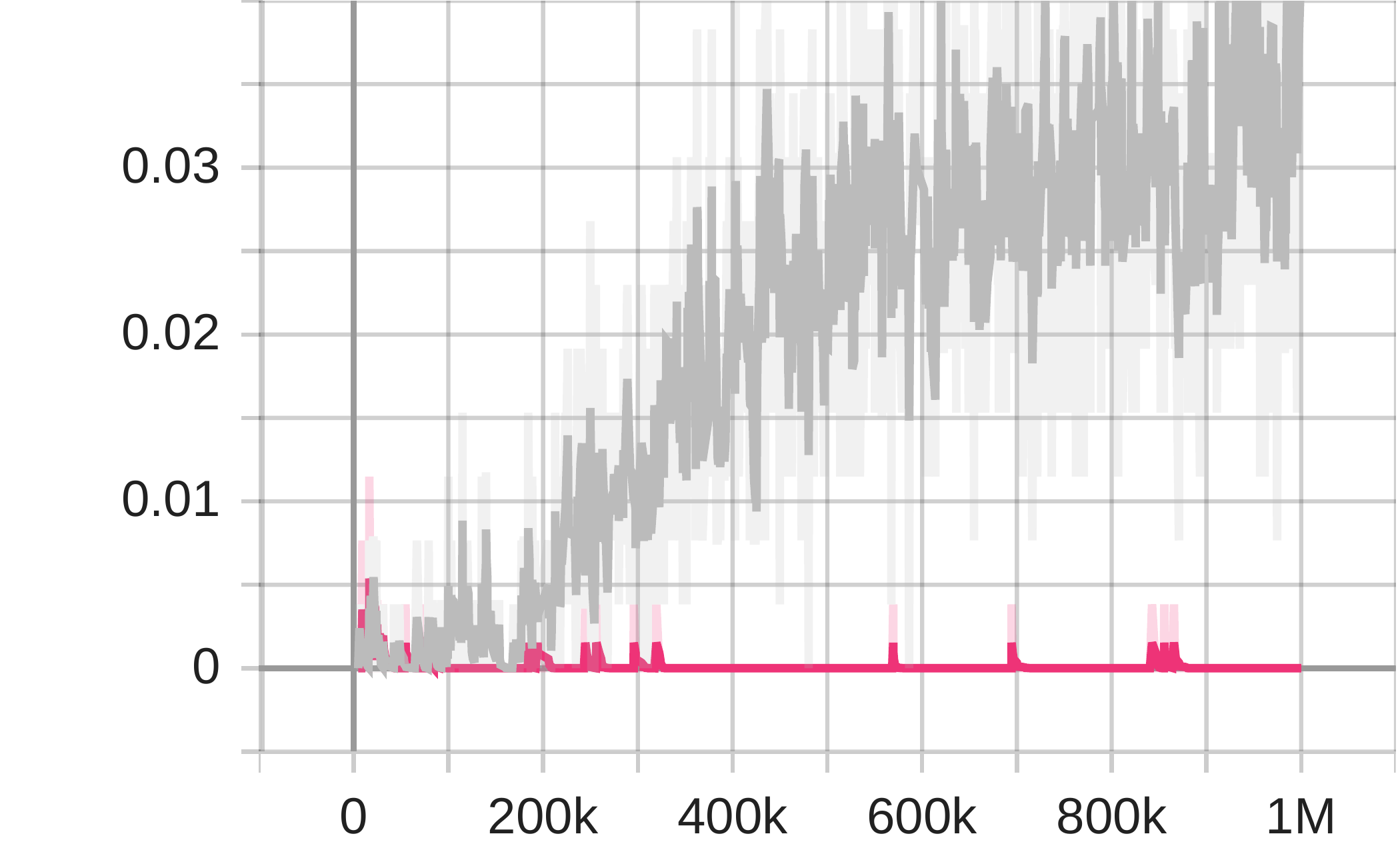}
        \caption{Reward}
        \label{fig:toolhang_train_batch_reward}
    \end{subfigure}
    \caption{Comparing the (a) \textit{Deviation from BC}, (b) \textit{policy gradient loss}, and (c) \textit{reward} training curves of a successful run (marked as grey) and a failed run (marked as red) in \textit{Tool Hang}.}
    \label{fig:toolhang_train}
\end{figure}

In our experiments, such an abrupt decrease in policy gradient loss happens frequently, with varying scales and timing, causing the training results to have high variance. Using an adaptive weight of the KL penalty might be a potential solution, which we wish to investigate in future work. 

We do not believe 5 seeds are enough to quantitatively reflect the chance of such sudden deviation happening. An alternative solution would be to compare only the results where such deviation did not happen, which is why we chose to report the top-1 performing seed in our main paper.

\newpage
\section{\rebuttal{}{TAMP Formulation}}\label{app:tamp}

We specify TAMP formulations in manner similar to that of~\citet{mandlekar2023hitltamp}.
We'll use the ``Tool Hang'' task in Figure~\ref{fig:sequence} as an example of integrating traditional and learned actions within TAMP.
The goal of this task is to pick and insert the frame into the stand and then pick and hang the wrench on the frame.
Accordingly, a sequence of discrete action types that accomplishes this goal is: 
\begin{equation*}
    \vec{a} = [\pddl{move}(...), \pddl{pick}(...), \pddl{move}(...), \pddl{insert}(...), \pddl{move}(...), \pddl{pick}(...), \pddl{move}(...), \pddl{hang}(...)].
\end{equation*}
We wish to learn the most contact-rich actions, namely \pddl{insert} and \pddl{hang}.
For brevity, we'll describe just the \pddl{insert} action since the \pddl{hang} action has a similar description.

Like in~\cite{garrett2020pddlstream}, we represent planning state variables using {\em predicates}.
Planning actions can be applied to planning states if they satisfy an action's of predicate {\em preconditions} (\kw{pre:}). 
Upon application, an action modifies the truth value of the state variables through its predicate {\em effects} (\kw{eff:}).
Predicates and actions are parameterized by a list of typed arguments.
Our TAMP domain involves the following types:
\begin{itemize}
    \item \pddl{conf} - a robot configuration,
    \item \pddl{traj} - a robot trajectory comprised of a sequence of configurations,
    \item \pddl{obj} - a manipulable object,
    \item \pddl{grasp} - an object grasp pose,
    \item \pddl{pose} - an object placement pose,
    \item \pddl{policy} - a learned closed-loop robot policy,
\end{itemize}
and the following predicates: %
\begin{itemize}
    \item \pddl{AtConf}($q$: \pddl{conf}) - the robot is currently at configuration $q$,
    \item \pddl{HandEmpty}() - the robot's hand is currently empty,
    \item \pddl{AtPose}($o$: \pddl{obj}, $p$: \pddl{pose}) - object $o$ is currently at placement pose $p$,
    \item \pddl{AtGrasp}($o$: \pddl{obj}, $g$: \pddl{grasp}) - object $o$ is currently grasped with grasp pose $g$,
    \item \pddl{Inserted}($o$: \pddl{obj}; $\pi$: \pddl{policy}) - object $o$ is inserted into the stand as a result of executing policy $\pi$,
    \item \pddl{Motion}($q_1$: \pddl{conf}, $\tau$: \pddl{traj}, $q_2$: \pddl{conf}) - $\tau$ is a trajectory that connects configurations $q_1$ and $q_2$,
    \item \pddl{Kin}($q$: \pddl{conf}, $o$: \pddl{obj}, $g$: \pddl{grasp}, $p$: \pddl{pose}) - configuration $q$ satisfies a kinematics constraint with placement pose $p$ when object $o$ is grasped with grasp pose $g$,
    \item \pddl{PreInsert}($o$: \pddl{obj}, $g$: \pddl{grasp}, $q$: \pddl{conf}, $\pi$: \pddl{policy}) - object $o$ grasped with grasp pose $g$ and the robot at configuration $q$ are initiation states for policy $\pi$.
\end{itemize}

The \pddl{move} action is deployed using a joint-trajectory controller.
Its parameters are the current robot configuration $q_1$, new configuration $q_2$, and trajectory $\tau$.
\begin{lstlisting}
move|$(q_1: \text{conf},\; q_2: \text{conf},\; \tau: \text{traj})$|
 |\kw{pre}:| {Motion|$(q_1, \tau, q_2)$|, AtConf|$(q_1)$|}
 |\kw{eff}:| {AtConf|$(q_2)$|, |$\neg$|AtConf|$(q_1)$|}
\end{lstlisting}

The \pddl{pick} action instantaneously grasps object $o$.
Its parameters are object $o$, future grasp pose $o$, current placement pose $p$, and current robot configuration $q$.
\begin{lstlisting}
pick|$(o: \text{obj},\; g: \text{grasp},\; p: \text{pose},\; q: \text{conf})$|
  |\kw{pre}:| {Kin|$(q, o, g, p)$|, AtPose|$(o, p)$|, HandEmpty|$()$|, AtConf|$(q)$|}
  |\kw{eff}:| {AtGrasp|$(o, g)$|, |$\neg$|AtPose|$(o, p)$|, |$\neg$|HandEmpty|$()$|}
\end{lstlisting}

The \pddl{insert} action is deployed using a learned policy $\pi$.
Its parameters are object $o$, current object grasp $g$, current robot configuration $q_1$, future configuration $\widehat{q_2}$, and learned policy $\pi$.
The $\pddl{PreInsert}(o, g, q_1, \pi)$ precondition models the set of combined object and robot states that initiate the action, thus defining the initial state distribution of the MDP for policy $\pi$.
The $\pddl{Inserted}(o, \pi)$ effect models the set of terminal states of the MDP for policy $\pi$, namely where the sparse reward $r(s) = 1$.
Unlike \pddl{move} and \pddl{pick}, \pddl{insert} is a {\em stochastic action} because end configuration $\widehat{q_2}$ depends on the execution of policy $\pi$.
Because of this, we replan after executing each learned action.
\begin{lstlisting}
insert|$(o: \text{obj},\; g: \text{grasp},\; q_1: \text{conf},\; \widehat{q_2}: \text{conf},\; \pi: \text{policy})$|
 |\kw{pre}:| {|\underline{PreInsert$(o,g,q_1,\pi)$}|, AtGrasp|$(o, g)$|, AtConf|$(q_1)$|}
 |\kw{eff}:| {|\underline{Inserted$(o, \pi)$}|, HandEmpty|$()$|, |\underline{AtConf$(\widehat{q_2})$}|, |$\neg$|AtGrasp|$(o, g)$|, |$\neg$|AtConf|$(q_1)$|}
\end{lstlisting}

\textbf{How do we select the handoff actions?} 

    \sysName offers flexibility to human modelers for deciding which skills should be learned, depending on the precision requirements of the interaction. In our experiments, we used our understanding of the limits of the TAMP system to determine which skills should be learned. The hand-crafted skills, or the classical skills in our work include moving without colliding (transit and transfer motion) and grasping, which are fairly easy to automate. The rest of the skills are learned.

\newpage

\section{\rebuttal{}{Bridging TAMP Planner and RL}}\label{app:bridging}

\caelan{Reference in the main text}

\newcommand{\planner}{\mathsf{P}}
\newcommand{\spre}[1]{S_{pre}^{#1}}
\newcommand{\seff}[1]{S_{eff}^{#1}}
\newcommand{\sinit}{S_{init}}

In this section, we describe how we bridge the gap between a TAMP planner and RL in full detail. We follow our main text and use $\mc{S}$ and $\mc{A}$ to denote the state and action space respectively, with $T:\mc{S}\times\mc{A}\rightarrow\mc{S}$ being the state transition function and $S_*\subseteq \mc{S}$ being the set of goal states.

\subsection{Planner Formulation}

\caelan{Rather than model the full sequence explicitly as a tuple, we can just describe each action individually using $(\spre{i}, \seff{i})$ and say that we have a set of actions of this form.}

We use TAMP as our default planner in our main text. In this section, we generalize our method to any planner with the following assumptions. 

A \sysName-\textit{compatible} planner $\planner$, conditioned on a generated high-level plan, can be defined as a tuple $(N, \sinit, \{\spre{i}\}, \{\seff{i}\})_{i=1}^N$, where $N$ is the number of handoff sections; $\sinit\subseteq\mc{S}$ is set of the admissible initial states; for handoff section $i$, $\spre{i}\subseteq\mc{S}$ is the set of states where the preconditions for this section are satisfied, while $\seff{i}\subseteq\mc{S}$ is the set where the desired effects are realized. The planner works by iterating through the sections from $1$ to $N$. The planner should satisfy the following properties.

\textbf{Sequence validity.} For each section $i$, if the desired effects of the previous section are satisfied, i.e., the current environment state $s\in \seff{i-1}$ (let $\seff{0}=\sinit$ for notational convenience), the planner then plans a sequence of actions that always terminates and leads to a new state that satisfies the preconditions of the current section, i.e., $s'\in\spre{i}$.

\caelan{We might just want to say that the state-space has no deadends, e.g. any state is reachable from another state.}

\textbf{Section validity.} For each section $i$, starting from any state $s\in\spre{i}$, a state $s'\in\seff{i}$ is reachable within a finite number of steps.

\textbf{Goal validity.} Any state that satisfies the effects of the final section should be a goal state, i.e., $\seff{N}\subseteq S_*$.

\subsection{Connecting the Planner with RL}

\caelan{This is the most critical modeling component to connect TAMP to the MDP subproblems.}

\sysName runs by interleavingly executing the planner $\planner$ and the agent $\pi$, iterating through the $N$ handoff sections. In the $i$-th section, if the current state $s\notin\seff{i-1}$, then we terminate the process and report failure; otherwise, we execute the planner until the preconditions of this section are met. The planner then hands it off to the agent until the desired effects are met or a specified step limit has been reached. 

\begin{theorem}
    Let $\planner$ be a \sysName-compatible planner, i.e., $\planner$ satisfies \textit{sequence validity}, \textit{section validity}, and \textit{goal validity}. There exists an agent $\pi_*$ with which \sysName reaches the goal state deterministically within a finite number of steps if the initial state is admissible by $\planner$, i.e., $s_0\in\sinit$ and with a large enough step limit.
\end{theorem}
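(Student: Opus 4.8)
The plan is to convert the three ``validity'' properties into an explicit construction of $\pi_*$ together with an explicit finite step count, by following a single deterministic run of \sysName section-by-section. The starting observation is that $T:\mc{S}\times\mc{A}\to\mc{S}$ is a deterministic function, so once we fix a deterministic policy the entire execution of \sysName from a given $s_0$ is one deterministic trajectory; hence it is enough to define the agent correctly on the finitely many states that trajectory visits and to bound its length.

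To build the agent, for each section $i$ define the shortest-path potential $d_i:\mc{S}\to\mbb{N}\cup\{\infty\}$, where $d_i(s)$ is the least number of actions needed to drive the state from $s$ into $\seff{i}$; thus $d_i(s)=0$ iff $s\in\seff{i}$, and \emph{section validity} says precisely that $d_i(s)<\infty$ for every $s\in\spre{i}$. By determinism of $T$, whenever $0<d_i(s)<\infty$ there is an action $a$ with $d_i(T(s,a))=d_i(s)-1$; fix one such action $a_i(s)$ by an arbitrary tie-break and let $\pi_*^i$ be the deterministic policy $s\mapsto a_i(s)$ (defined arbitrarily where $d_i(s)\in\{0,\infty\}$). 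Since Algorithm~\ref{alg:policy} attaches a policy to each learned action, I take $\pi_*$ to be the family $\{\pi_*^i\}_{i=1}^N$; this also collapses to a single Markovian policy whenever the relevant section state-sets are disjoint.

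The core is then an induction on $i\in\{0,1,\dots,N\}$ with the invariant ``\sysName reaches some state $s_i\in\seff{i}$ after finitely many steps'' (base case: $\seff{0}=\sinit$ and $s_0=s_i$ is the given admissible state). For the step, assume the run has reached $s_{i-1}\in\seff{i-1}$. Then the failure test ``$s\notin\seff{i-1}$'' does not trigger, so \sysName calls the planner $\planner$; by \emph{sequence validity} it returns a finite, always-terminating action sequence, whose execution reaches some $s_i'\in\spre{i}$ in finitely many steps. \sysName now runs $\pi_*^i$, and by construction $d_i$ drops by exactly $1$ at each step, so the state enters $\seff{i}$ after exactly $d_i(s_i')<\infty$ steps; if the section-$i$ step limit is at least $d_i(s_i')$ this happens before the limit, closing the induction. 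After the $N$-th iteration the state lies in $\seff{N}\subseteq S_*$ by \emph{goal validity}, so the next pass of Algorithm~\ref{alg:policy} sees the state in the goal set $G=S_*$ and returns \True. The total step count is a finite sum over $i$ of the planner-phase length and of $d_i(s_i')$, so any per-section step limit at least $\max_{1\le i\le N} d_i(s_i')$ makes ``large enough'' precise.

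I expect the one genuine obstacle to be exactly the middle step: upgrading the purely existential ``a state in $\seff{i}$ is reachable within a finite number of steps'' to an actual deterministic, memoryless agent that realizes it, and pinning down the step-limit constant. The shortest-path potential $d_i$ does both, and the key realization that keeps it clean is that, because the execution is deterministic, we never need a bound on $d_i$ uniform over all of $\spre{i}$ — only finiteness of $d_i$ at the single precondition state actually entered in section $i$. The remaining care is bookkeeping: checking that the planner phases terminate (\emph{sequence validity}), that the failure test never fires (it is passed on entry to every section by the induction hypothesis), and reconciling the single-policy problem statement with the per-action policies of Algorithm~\ref{alg:policy}.
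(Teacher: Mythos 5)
Your proof is correct and follows essentially the same route as the paper's: an induction over the handoff sections, using \emph{sequence validity} for the planner phases, \emph{section validity} for the agent phases, and \emph{goal validity} to conclude $\seff{N}\subseteq S_*$. The only difference is one of detail — the paper merely asserts the existence of $\pi_*$, whereas you make it explicit with the shortest-path potentials $d_i$ and note that the step limit need only cover the finitely many precondition states actually visited along the (deterministic) run.
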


\begin{proof}
    With an induction on the handoff sections, we can conclude that given \textit{sequence validity} and \textit{section validity} there exists an agent $\pi_*$ with which \sysName always reaches a state in $\seff{N}$ within a finite number of steps in each section. Given \textit{goal validity}, the final state is always a goal state.
\end{proof}

\textbf{Planner-induced MDPs.} In Sec~\ref{subsec:problem}, we formulate the planner-induced MDPs as a series of $N$ MDPs with shared dynamics and different reward functions and initial state distributions, each corresponding to a planner handoff section. We can now formally define each MDP's reward function: For the $i$-th MDP, the reward function is $r^i(s):=\ind{s\in\seff{i}}$. The support of the initial state distribution satisfies $\supp{p_0^i}\subseteq\spre{i}$.

\begin{figure}
    \centering
    \begin{subfigure}[b]{0.8\textwidth}
        \includegraphics[width=\textwidth]{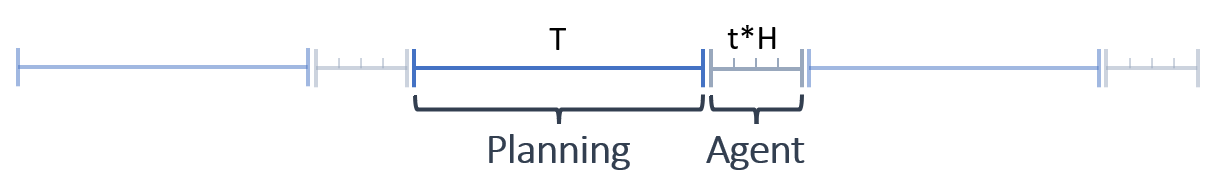}
        \caption{ Single TAMP-RL worker}
        \label{fig:app:tamp-rl-single}
    \end{subfigure}
    \begin{subfigure}[b]{0.8\textwidth}
        \includegraphics[width=\textwidth]{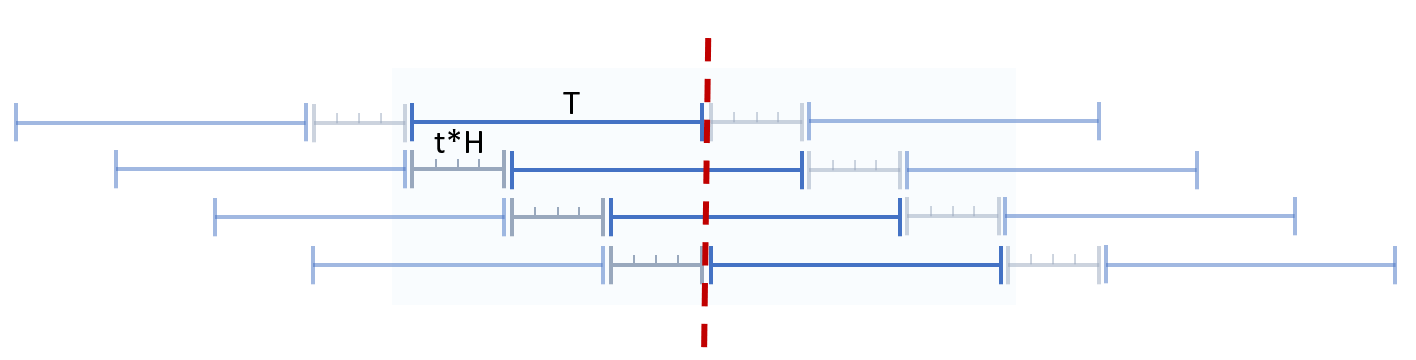}
        \caption{Parallelized TAMP-RL workers}
        \label{fig:app:tamp-rl-multi}
    \end{subfigure}
    \caption{Demonstrations of a single-process TAMP-RL pipeline and multi-process TAMP-RL pipeline.}
    \label{fig:app:tamp-rl}
\end{figure}

\textbf{Sample collection rate issue.} One critical issue we encountered when integrating a TAMP planner into RL training was the dropped rate of sample collection. To give more context, suppose it takes a fixed time of $t$ to sample one frame of environment interaction with an RL agent. Under this assumption, the natural sampling speed with pure RL is $1/t$ frames per second (FPS). When combined with a TAMP planner, let the planning time before each handoff section be $T$, and let the RL agent run for at least $H$ steps in each section, the resulting sampling speed becomes at most $H/(T+tH)$ FPS (see Figure~\ref{fig:app:tamp-rl-single}), a relative slowdown of $Ht/T$. In practice, the TAMP planning time $T$ can be much longer than $tH$, making this slowdown significant.

\textbf{Initial state distribution issue.} We know that the support of the initial state distribution in the $i$-th section is within $\seff{i}$. However, even if we assume the planner is fixed, the distribution can still change based on the agent behaviors in the previous sections. In the case of training a single agent for all sections, the distribution is even more prone to policy changes as the completion rates of previous sections can affect the appearance rates of later sections.

\subsection{Multi-Worker Scheduling Framework}

To resolve the two aforementioned issues, we designed the \textit{multi-worker scheduling framework} that utilizes parallelization, as introduced in Sec~\ref{subsec:multi_worker_tamp}.

\textbf{Solving the sample collection rate issue.} See Figure~\ref{fig:app:tamp-rl-multi}. The multi-worker system runs $n$ TAMP planners in parallel. When a worker finishes planning and reaches the next handoff section, it will notify the scheduler to be available and enter hiatus, until selected by the scheduler for RL interactions. If the system has enough workers, specifically when $n\geq \frac{T}{tH}$, there always will be a TAMP worker available for RL and reach the optimal sample collection rate of $1/t$ FPS.

\textbf{Solving the initial state distribution issue.} We attempt to alleviate the changing initial state distribution issue by applying a selection strategy to the scheduler. Specifically, the \texttt{sequential} strategy only accepts a section when the success rate of passing all the previous sections reaches a threshold, which indicates that the agent's behavior in previous sections becomes stable. The framework allows for other potentially more sophisticated selection strategies, even though we do not find it necessary in the scope of our work.

\newpage
\section{\rebuttal{}{Ablation: \sysName without Multi-Worker}}
\label{app:multiworker}

We provide an additional ablation study on how the sample collection rate boost from the multi-worker system benefits the overall performance. We run \sysName with 16 workers and 1 worker respectively for the same amount of 6-hour wall-clock time, for the first insertion of the \textit{Tool Hang} task. The result is shown in Table~\ref{tab:multi-worker-ablation}. 

\begin{table}[h!]
  \centering
    \caption{Comparing the success rates of the first insertion in \textit{Tool Hang} with and without multiple workers.}
  \begin{tabular}{crrr}
      \toprule
\# Workers & Success rate & Sampling FPS \\
      \midrule
16 & 98\% & 55.4 & \\
1 & 42\% & 7.7 & \\
    \bottomrule
  \end{tabular}
\label{tab:multi-worker-ablation}
\end{table}

\newpage
\section{\rebuttal{}{Additional Experiment Results}}
\label{app:rl-curves}

We show the RL learning curves in Figure~\ref{fig:rl-curves} comparing \sysName and naive RL. The x-axis represents the environment steps and the y-axis represents the success rate.

\begin{figure}[h]
    \centering
    \begin{subfigure}[b]{0.49\textwidth}
        \includegraphics[width=\textwidth]{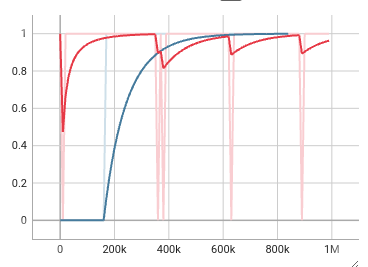}
        \caption{Square}
        \label{fig:square-curve}
    \end{subfigure}
    \hfill
    \begin{subfigure}[b]{0.49\textwidth}
        \includegraphics[width=\textwidth]{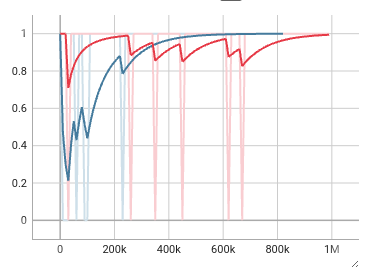}
        \caption{Square Broad}
        \label{fig:square-broad-curve}
    \end{subfigure}
    
    \centering
    \begin{subfigure}[b]{0.49\textwidth}
        \includegraphics[width=\textwidth]{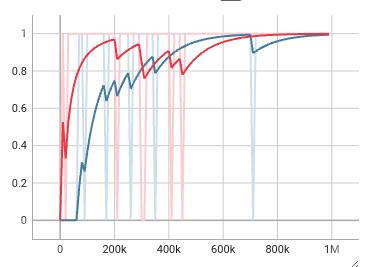}
        \caption{Coffee}
        \label{fig:coffee-curve}
    \end{subfigure}
    \hfill
    \begin{subfigure}[b]{0.49\textwidth}
        \includegraphics[width=\textwidth]{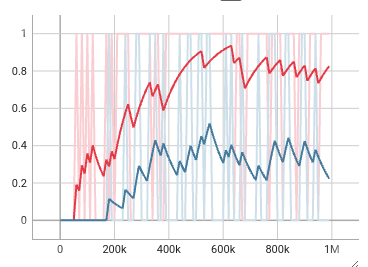}
        \caption{Coffee Broad}
        \label{fig:coffee-broad-curve}
    \end{subfigure}

    \centering
    \begin{subfigure}[b]{0.49\textwidth}
        \includegraphics[width=\textwidth]{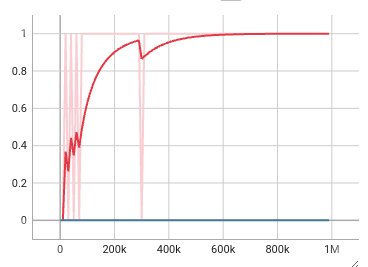}
        \caption{Three Piece}
        \label{fig:three-piece-curve}
    \end{subfigure}
    \hfill
    \begin{subfigure}[b]{0.49\textwidth}
        \includegraphics[width=\textwidth]{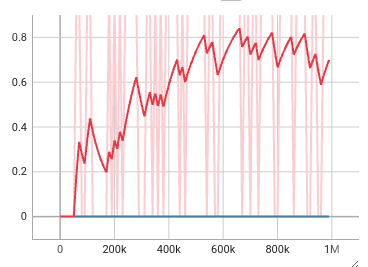}
        \caption{Three Piece Broad}
        \label{fig:three-piece-broad-curve}
    \end{subfigure}
    
    \caption{Comparing the RL learning curves of \textcolor[HTML]{e63946}{\sysName} and \textcolor[HTML]{457b9d}{naive RL}.}
    \label{fig:rl-curves}
\end{figure}

\newpage

\end{document}